\definecolor{mydarkblue}{rgb}{0,0.08,0.45}
\definecolor{myred}{rgb}{0.84,0.17,0.11}
\definecolor{mygreen}{rgb}{0.35,0.60,0.25}
\definecolor{myblue}{rgb}{0.19,0.44,0.72}
\theoremstyle{plain}
\newtheorem{theorem}{Theorem}[section]
\newtheorem{proposition}[theorem]{Proposition}
\newtheorem{lemma}[theorem]{Lemma}
\theoremstyle{definition}
\newtheorem{definition}[theorem]{Definition}
\newtheorem{assumption}[theorem]{Assumption}
\theoremstyle{remark}
\title{Bidirectional-Reachable Hierarchical Reinforcement Learning with Mutually Responsive Policies}
\author{Yu Luo  \\
    luoyu19@mails.tsinghua.edu.cn \\
    Department of Computer Science and Technology\\
    Tsinghua University
    \And
    Fuchun Sun$^\star$  \\
    fcsun@tsinghua.edu.cn \\
    Department of Computer Science and Technology\\
    Tsinghua University
    \And
    Tianying Ji  \\
    jity20@mails.tsinghua.edu.cn \\
    Department of Computer Science and Technology\\
    Tsinghua University
    \And
    Xianyuan Zhan  \\
    zhanxianyuan@air.tsinghua.edu.cn \\
    Institute for AI Industry Research\\
    Tsinghua University}
\begin{document}

\maketitle

\begin{abstract}
Hierarchical reinforcement learning (HRL) addresses complex long-horizon tasks by skillfully decomposing them into subgoals. Therefore, the effectiveness of HRL is greatly influenced by subgoal reachability.
Typical HRL methods only consider subgoal reachability from the unilateral level, where a dominant level enforces compliance to the subordinate level.
However, we observe that when the dominant level becomes trapped in local exploration or generates unattainable subgoals, the subordinate level is negatively affected and cannot follow the dominant level's actions.
This can potentially make both levels stuck in local optima, ultimately hindering subsequent subgoal reachability.
Allowing real-time bilateral information sharing and error correction would be a natural cure for this issue, which motivates us to propose a mutual response mechanism.
Based on this, we propose the Bidirectional-reachable Hierarchical Policy Optimization~(BrHPO)—a simple yet effective algorithm that also enjoys computation efficiency.
Experiment results on a variety of long-horizon tasks showcase that BrHPO outperforms other state-of-the-art HRL baselines, coupled with a significantly higher exploration efficiency and robustness\footnote{We have released our code here: \url{https://github.com/Roythuly/BrHPO}}.
\end{abstract}

\section{Introduction}
Reinforcement learning (RL) has demonstrated impressive capabilities in decision-making scenarios, ranging from achieving superhuman performance in games~\citep{mnih2015human,lample2017playing,silver2018general}, developing complex skills in robotics~\citep{levine2016end, schulman2015high} and enabling smart policies in autonomous driving~\citep{jaritz2018end,kiran2021deep,cao2023continuous}.
Most of these accomplishments are attributed to single-level methods~\citep{sutton2018reinforcement}, which learn a flat policy by trial and error without extra task decomposition or subgoal guidance.
While single-level methods excel at short-horizon tasks involving inherently atomic behaviors~\citep{levy2018learning,nachum2018data,pateria2021hierarchical}, they often struggle to optimize effectively in long-horizon complex tasks that require multi-stage reasoning or sparse reward signals.
To address this challenge, hierarchical reinforcement learning (HRL) has been proposed, aiming to decompose complex tasks into a hierarchy of subtasks or skills~\citep{kulkarni2016hierarchical, bacon2017option, vezhnevets2017feudal}. 
By exploiting subtask structure and acquiring reusable skills, HRL empowers agents to solve long-horizon tasks efficiently.

Subgoal-based HRL methods, a prominent paradigm in HRL, partition complex tasks into simpler subtasks by strategically selecting subgoals to guide exploration~\citep{vezhnevets2017feudal, nachum2018data}.
Subgoal reachability, which is utilized as an intrinsic reward for exploration in different subtasks~\citep{sukhbaatar2018intrinsic}, is crucial in evaluating how effectively the low-level policies' exploration trajectory aligns with the high-level policy's subgoal, ultimately determining task performance~\citep{vezhnevets2017feudal, zhang2020generating}.
However, existing approaches for improving subgoal reachability predominantly focus on one level of the hierarchical policy, imposing dominance on the other level. This can be categorized as either low-level dominance or high-level dominance~\citep{nachum2018data, zhang2020generating, andrychowicz2017hindsight, chane2021goal, eysenbach2019search, jurgenson2020sub}.
Low-level dominance (Figure~\ref{structure_figure}\textcolor{mydarkblue}{a}) refers to the accommodation of low-level passive exploratory behaviour by the high-level policy, causing the agent to get stuck near the starting position. 
On the other hand, high-level dominance (Figure~\ref{structure_figure}\textcolor{mydarkblue}{b}) may result in unattainable subgoals, causing repeated failure and sparse learning signals for the low-level policy.
To assess these methods, we applied them to two HRL benchmarks, AntMaze and AntPush, and generated state-subgoal trajectories for visualization.
The results reveal that the former methods exhibit lower exploration efficiency as the high level must generate distant subgoals to guide the low level (Figure~\ref{effectiveness_visualization}\textcolor{mydarkblue}{a}), while the latter methods may create unattainable subgoals, resulting in the low-level policy's inability to track them (Figure~\ref{effectiveness_visualization}\textcolor{mydarkblue}{b}). 

\begin{figure*}[t]
\centering
\vspace{-10px}
\begin{minipage}[t]{.49\textwidth}
    \centering
    \includegraphics[width=1.0\textwidth]{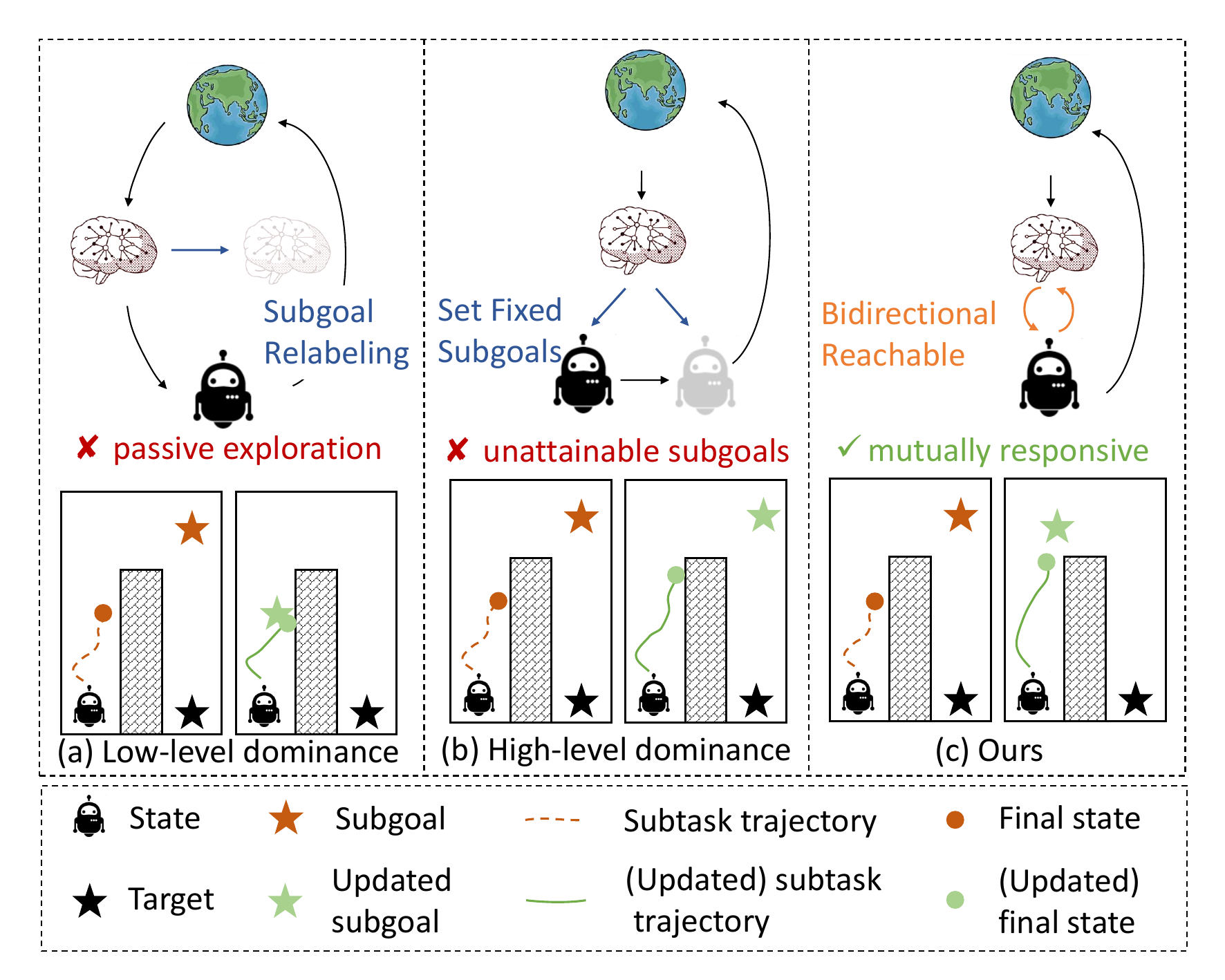}
    \caption{A motivating example of our proposed BrHPO. The earth, brain, and robot symbols stand for the environment, high-level policy, and low-level policy, respectively. We illustrate the behaviors of hierarchical policies before and after updated for each case. \textbf{Left}: Updated subgoal is limited by low-level exploration. \textbf{Middle}: Low-level policy struggles to approach the fixed subgoal. \textbf{Right}: hierarchical policies are mutually responsive for subgoal reachability.}
    \vspace{-10px}
    \label{structure_figure}
\end{minipage}
\hfill
\hspace{2pt}
\begin{minipage}[t]{.49\textwidth}
    \centering
    \includegraphics[width=1.0\textwidth]{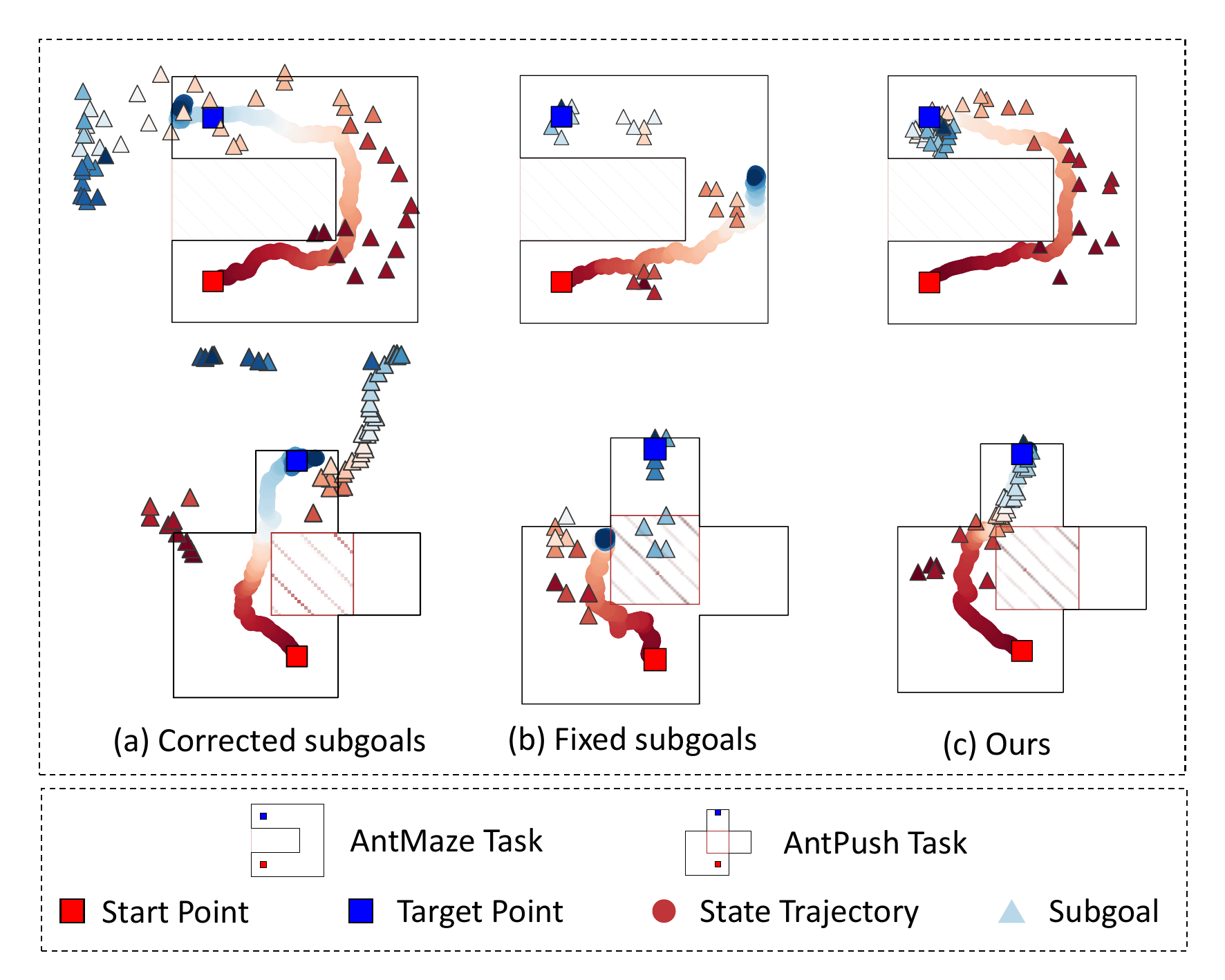}
    \caption{The state-subgoal trajectory comparison of baselines \textbf{HIRO} (a), \textbf{RIS} (b) and our \textbf{BrHPO} (c). We visualize the state trajectories (represented by the red-to-blue gradient lines) and the guided subgoals (represented by triangles). Note that lines and triangles of the same colour indicate that they belong to the same subtask. The results demonstrate that BrHPO can improve the alignment between states and subgoals, thus benefitting overall performance.}
    \vspace{-10px}
    \label{effectiveness_visualization}
\end{minipage}
\end{figure*}

Enforcing subgoal reachability through unidirectional communication between the two levels has limitations in overall performance improvement.
A bidirectional reachability approach, illustrated in Figure~\ref{structure_figure}\textcolor{mydarkblue}{c}, holds the potential to be more effective in HRL.
From an optimization perspective, bidirectional reachability provides two key benefits: 1) \textit{the high-level policy can generate subgoals that strike a balance between incentive and accessibility}, and 2) \textit{the low-level policy can take more effective actions that drive subtask trajectories closer to the subgoal}.
Despite its potential advantages, bidirectional subgoal reachability has not been extensively studied in previous research, and its effectiveness in enhancing HRL performance requires further investigation.
We explore the theoretical benefits of bidirectional insights, and empirically demonstrate its effectiveness through visualizing the alignment between states and subgoals in Figure~\ref{effectiveness_visualization} and our ablation studies.

This paper aims to investigate the potential of bidirectional subgoal reachability in improving subgoal-based HRL performance, both theoretically and empirically. 
Specially, we propose a joint value function and then derive a performance difference bound for hierarchical policy optimization.
The analysis suggests that enhancing subgoal reachability, from the mutual response of both-level policies, can effectively benefit overall performance. 
Motivated by these, our main contribution is a simple yet effective algorithm, Bidirectional-reachable Hierarchical Policy Optimization (BrHPO) which incorporates a mutual response mechanism to efficiently compute subgoal reachability and integrate it into hierarchical policy optimization.
Through empirical evaluation, we demonstrate that BrHPO achieves promising asymptotic performance and exhibits superior training efficiency compared to state-of-the-art HRL methods. 
Additionally, we investigate different variants of BrHPO to showcase the effectiveness and robustness of the proposed mechanism.

\section{Preliminaries}
We consider an infinite-horizon discounted Markov Decision Process (MDP) with state space $\mathcal{S}$, action space $\mathcal{A}$, goal/subgoal space $\mathcal{G}$, unknown transition probability $P_{s,s'}^{a}: \mathcal{S}\times\mathcal{A}\times\mathcal{S}\rightarrow[0,1]$, reward function $r:\mathcal{S}\times\mathcal{A}\times\mathcal{G}\rightarrow\mathbb{R}$, and discounted factor $\gamma\in(0,1)$. The objective of RL is to find a policy $\pi:\mathcal{S}\rightarrow\Delta(\mathcal{A})$ to maxmize the discounted cumulative reward from the environment, which can be formed as $\pi^*=\arg\max_{\pi}\mathbb{E}_{(s_t,a_t)\sim P,\pi}\left[\sum_{t=0}^\infty\gamma^tr(s_t,a_t)\right]$.

Subgoal-based HRL, also called Feudal HRL~\citep{dayan1992feudal,vezhnevets2017feudal}, comprises two hierarchies: a high-level policy generating subgoals, and a low-level policy pursuing subgoals in each subtask.
Assume that each subtask contains a fixed length of $k$ timesteps, allowing us to split the original task into multiple subtasks.
Given the task goal $\hat{g}$, at the beginning of the $i$-th subtask where $i\in\mathbb{N}$, the high-level policy $\pi_h$ observes state $s_{ik}$ and then outputs a subgoal $g_{(i+1)k}\sim\pi_h(\cdot|s_{ik},\hat{g})\in\mathcal{G}$.
Then, in each subtask, the low-level policy $\pi_l$ performs actions conditioned on the subgoal and the current state, $a_{ik+j}\sim\pi_l(\cdot|s_{ik+j},g_{(i+1)k})\in\mathcal{A}$, where $j\in[0,k-1]$ is a pedometer in one single subtask. 
With the guidance from the subgoal, the state-subgoal-action trajectory in the $i$-th subtask comes out to be
\begin{equation}
\tau_i^{\pi_h,\pi_l}\triangleq \left\{(s_{ik+j})|s_{ik},g_{(i+1)k}\sim\pi_h(\cdot|s_{ik},\hat{g}),a_{ik+j}\sim\pi_l(\cdot|s_{ik+j},g_{(i+1)k})\right\}_{j=0}^{k-1},
\end{equation}
and the whole task trajectory forms by stitching all subtask trajectories as $\tau=\cup_{i=0}^\infty\left(\tau_i^{\pi_h,\pi_l}\right)$. 

Following prior methods~\citep{andrychowicz2017hindsight,nachum2018data,zhang2020generating}, 
we optimize $\pi_h$ based on the high-level reward $r_h$, defined as the environment reward feedback summated over a subtask
\begin{equation}\label{highreward}
r_h(\tau_i^{\pi_h,\pi_l})=r_h(s_{ik},g_{(i+1)k})=\sum_{j=0}^{k-1}r(s_{ik+j},a_{ik+j}),
\end{equation}
and the intrinsic reward for the low-level policy $\pi_l$ is 
\begin{equation}\label{lowreward}
r_l(s_{ik+j},a_{ik+j},g_{(i+1)k})=-\mathcal{D}(\psi(s_{ik+j+1}),g_{(i+1)k}).
\end{equation}
where $\psi:\mathcal{S}\mapsto\mathcal{G}$ is a pre-defined state-to-goal mapping function and $\mathcal{D}:\mathcal{G}\times\mathcal{G}\rightarrow\mathbb{R}_{\geq 0}$ is a chosen binary or continuous distance measurement~\citep{zhang2022adjacency}.

\section{Bidirectional Subgoal Reachability in HRL}
In this section, we introduce the concept of bidirectional subgoal reachability and highlight its differences from the previously studied unidirectional reachability. Specifically, bidirectional subgoal reachability considers the capacities of both high-level guidance and low-level exploration, allowing for more flexibility in HRL. We then discuss how this bidirectional reachability is integrated into the optimization objective of hierarchical policies, resulting in a mutual response mechanism. Finally, we present performance difference bounds associated with bi-directional reachability, providing valuable theoretical insights for subgoal-based HRL.

\subsection{Bidirectional Subgoal Reachability}
In contrast to previous unilateral subgoal reachability, termed as the high- or low-level dominance, our work aims to propose a bidirectional subgoal reachability metric that simultaneously considers the cooperation capacities of high-level guidance and low-level exploration within a single subtask.
\begin{definition}
The bidirectional subgoal reachability $\mathcal{R}^{\pi_h,\pi_l}_i$ at the $i$-th subtask is defined by
\begin{equation}\label{reachability_function}
\mathcal{R}^{\pi_h,\pi_l}_i=\mathbb{E}_{g_{(i+1)k}\sim\pi_h,s_{(i+1)k}\sim\tau^{\pi_h,\pi_l}_i}\left[\mathcal{D}(\psi(s_{(i+1)k}),g_{(i+1)k})/\mathcal{D}(\psi(s_{ik}),g_{(i+1)k})\right].
\end{equation}
\end{definition}

In this definition, subgoal reachability is equal to the ratio of the final distance (the final reached state $s_{(i+1)k}$ to the subgoal $g_{(i+1)k}$) to the initial distance (the initial state $s_{ik}$ to the subgoal $g_{(i+1)k}$). Note that the smaller $\mathcal{R}^{\pi_h,\pi_l}_i$ means the better subgoal reachability and we define $\mathcal{R}^{\pi_h,\pi_l}_i=0$ if $\mathcal{D}(\psi(s_{ik}),g_{(i+1)k})=0$. Although conceptually simple, this form has two benefits:
\begin{itemize}
    \item When given the initial state $s_{ik}$ of the sub-task, the subgoal reachability depends only on the final distance, and is independent of the intermediate exploration process, aligning with the properties of hierarchical abstraction. Besides, this is conducive to decoupling the guidance of the high-level policy and the exploration of the low-level policy, avoiding the issues of high- or low-level dominances;
    \item Using initial distance $\mathcal{D}(\psi(s_{ik}),g_{(i+1)k})$ as the regularization can eliminate the difference caused by the initial conditions of different sub-tasks. Thus, it can comprehensively measure whether the subgoal is easily reachable and whether the sub-task is easy to complete. For instance, a subgoal with an initial distance of $10$ and a final distance of $3$, although the final distance is larger, has a better subgoal reachability than a subgoal with an initial distance of $5$ but a final distance $2$.
\end{itemize}

In contrast to previous methods, such as using environmental dynamics~\citep{zhang2020generating} or policy behavior~\citep{nachum2018data,kreidieh2019inter} for measuring subgoal reachability, our method is a continuous metric and can assess the cooperative effects of hierarchical policies rather than one of them. Therefore, improving subgoal reachability during policy optimization can be effective in enhancing the performance of hierarchical policies. Further, by recognizing that the low-level intrinsic reward shares the same form as the distance computation, we can replace the distance computation with the low-level reward. Thus, we can calculate the subgoal reachability by
\begin{equation}\label{appro_subgoal_reachability}
\mathcal{R}^{\pi_h,\pi_l}_i=\mathbb{E}_{g_{(i+1)k}\sim\pi_h,s_{(i+1)k}\sim\tau^{\pi_h,\pi_l}_i}\left[\frac{\mathcal{D}(\psi(s_{(i+1)k}),g_{(i+1)k})}{\mathcal{D}(\psi(s_{ik}),g_{(i+1)k})}\right]={\mathbb{E}_{r_l\sim\tau^{\pi_h,\pi_l}_i}\frac{r_{l,(i+1)k}}{r_{l,ik}}}.
\end{equation}
Specifically, we use a temporary replay buffer for storing subtask trajectory $\tau^{\pi_h,\pi_l}_i$ upon subtask completion. Then, we can sample the first low-level reward $r_{l,ik}=r_l(s_{ik},a_{ik},g_{(i+1)k})$ and the last one $r_{l,(i+1)k}=r_l(s_{(i+1)k},a_{(i+1)k},g_{(i+1)k})$ from the temporary buffer to calculate the reachability. Notably, such a design is quite lightweight, incurring $O(1)$ computational complexity, without introducing additional training costs.

\subsection{Bidirectional Reachability Hierarchical Policy Optimization}
With the bidirectional subgoal reachability in hand, we turn to design the core mutual response mechanism, which aims at enhancing the reachability with the help of hierarchical policies.

\paragraph{High-level policy optimization.}
In our approach, we opt to use $\mathcal{R}^{\pi_h,\pi_l}_i$ as a regularization for optimizing $\pi_h$. During the high-level policy evaluation phase, we exclusively rely on rewards from the environment to iteratively compute Q-values, which ensures the accuracy of guidance performance evaluation. Furthermore, in the policy improvement phase, using $\mathcal{R}^{\pi_h,\pi_l}_i$ as the regularization explicitly constrains the high-level policy's subgoal generation. This focus allows it to concern the subgoal-reaching performance of the low-level policy within a subtask. Let $D_h=D_h\cup\{\tau^{\pi_h,\pi_l}_i\}$ be the high-level replay buffer, we evaluate the high-level policy by,
\begin{equation}\label{critic_high}
Q^{\pi_h}(s,g)=\arg\min_Q\frac{1}{2}\mathbb{E}_{s,g\sim D_h}\left[r_h(s,g)+\gamma\mathbb{E}_{s'\sim D_h,g'\sim\pi_h}Q^{\pi_h}(s',g')-Q^{\pi_h}(s,g)\right]^2,
\end{equation}
and update the high-level policy by minimizing the expected KL-divergence with the reachability term as,
\begin{equation}\label{actor_high}
\pi_h=\arg\min_{\pi_h}\mathbb{E}_{s\sim D_h}\left[{\rm D}_{KL}(\pi_h(\cdot|s)\Vert \exp(Q^{\pi_h}(s,g)-V^{\pi_h}(s)))+\lambda_1\mathcal{R}^{\pi_h,\pi_l}_i\right],
\end{equation}
where $V^{\pi_h}(s)=\mathbb{E}_{g\sim\pi_h(\cdot|s)}\left[Q^{\pi_h}(s,g)-\log\pi_h(\cdot|s)\right]$ is the high-level soft state value function and $\lambda_1$ is a weight factor. Thus, we can adjust the response of the high level through tuning $\lambda_1$. 

\paragraph{Low-level policy optimization.}
In contrast to high-level policy, we utilize $\mathcal{R}^{\pi_h,\pi_l}_i$ as a reward bonus for low-level policy. This approach is designed to enable $\pi_l$ to simultaneously focus on both low-level rewards and subgoal reachability during subgoal exploration. To ensure the improvement of bidirectional subgoal reachability by low-level policy, we introduce subgoal reachability as well as the low-level reward, which is formulated as
\begin{equation}\label{modified_low_reward}
\hat{r}_l(s_{ik+j},a_{ik+j},g_{(i+1)k})=r_l(s_{ik+j},a_{ik+j},g_{(i+1)k})-\lambda_2\mathcal{R}^{\pi_h,\pi_l}_i.
\end{equation}
Let $D_l=D_l\cup\{(s,g,a,\hat{r}_l,s',g)\}$ be the low-level replay buffer. With the surrogate low-level reward established, the evaluation and optimization of low-level policy can be performed by
\begin{equation}\label{critic_low}
Q^{\pi_l}(s,a)=\arg\min_Q\frac{1}{2}\mathbb{E}_{s,g,a\sim D_l}\left[\hat{r}_l(s,a,g)+\gamma\mathbb{E}_{s',g\sim D_l,a'\sim\pi_l}Q^{\pi_l}(s',a')-Q^{\pi_l}(s,a)\right]^2,
\end{equation}
\begin{equation}\label{actor_low}
\pi_l=\arg\min_{\pi_l}\mathbb{E}_{s,g\sim D_l}\left[{\rm D}_{KL}(\pi_l(\cdot|s,g)\Vert \exp(Q^{\pi_l}(s,a)-V^{\pi_l}(s)))\right].
\end{equation}

\subsection{Theoretical Insights}
The previous subsection proposes an optimization algorithm for high- and low-level policies based on bidirectional subgoal reachability, and we investigate how this algorithm works in this section. First, to evaluate the overall performance of HRL, we construct a joint value function by calculating the discounted summation of step-wise rewards accumulated along the trajectory generated by both the high- and low-level policies, as presented below:
\begin{definition}[Joint Value Function of Hierarchical Policies]
The long-term cumulative return $V^{\pi_h,\pi_l}(s_0)$ of the subgoal-based HRL in the real environment can be defined as,
\begin{align}\label{Joint_Value_Function}
V^{\pi_h,\pi_l}(s_0) &= \sum_t^\infty\gamma^t\mathbb{E}_{s,a\sim \mathbb{P}^{\pi_l,g}_t(\cdot,\cdot|s_0), g\sim\pi_h(\cdot|s)}\left[r(s_t,a_t,\hat{g})\right]\nonumber\\
&=\sum_{i=0}^{\infty}\mathbb{E}_{g\sim \pi_h(\cdot|s)}\left[\gamma^{ik}\left(\sum_{j=0}^{k-1}\gamma^j\mathbb{E}_{s,a\sim \mathbb{P}^{\pi_l,g}_{ik+j}(\cdot,\cdot|s_0)}r(s_{ik+j},a_{ik+j},\hat{g})\right)\right].
\end{align}
\end{definition}

To investigate the optimality of the policies, we derive a performance difference bound between an induced optimal hierarchical policy $\Pi^*=\{\pi^*_h,\pi^*_l\}$ and a learned one $\Pi=\{\pi_h,\pi_l\}$, which can be formulated as $V^{\Pi^*}(s)-V^{\Pi}(s)\leq C$.
\begin{theorem}[Sub-optimal performance difference bound of HRL]\label{PDL_HRL}
The performance difference bound $C$ between the induced optimal hierarchical policies $\Pi^*$ and the learned one $\Pi$ can be
\begin{align}\label{PDL_bound_C} 
C(\pi_h,\pi_l)=
\frac{2r_{max}}{(1-\gamma)^2}\Bigg
[\underbrace{(1+\gamma)\mathbb{E}_{g\sim\pi_h}\left(1+\frac{\pi^*_h}{\pi_h}\right)\epsilon^g_{\pi^*_l,\pi_l}}_{\text{(\romannumeral1) hierarchical policies' inconsistency}}+\underbrace{2\left(\mathcal{R}^{\pi_h,\pi_l}_{max} + 2\gamma^k\right)}_{\text{(\romannumeral2) subgoal reachability penalty}}\Bigg],
\end{align}
where $\epsilon^g_{\pi^*_l,\pi_l}$ is the distribution shift between $\pi^*_l$ and $\pi_l$, and $\mathcal{R}^{\pi_h,\pi_l}_{max}$ is the maximum subgoal reachability penalty from the learned one $\Pi$, both of which are formulated as,
\begin{equation*}
\epsilon^g_{\pi^*_l,\pi_l}=\max_{s\in\mathcal{S},g\sim\pi_h}{\rm D}_{TV}\left(\pi^*_l(\cdot|s,g)\Vert\pi_l(\cdot|s,g)\right) \quad \text{and} \quad \mathcal{R}^{\pi_h,\pi_l}_{max}=\max_{i\in\mathbb{N}}\mathcal{R}^{\pi_h,\pi_l}_i.
\end{equation*}
\end{theorem}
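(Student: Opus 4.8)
The plan is to bound $V^{\Pi^*}(s)-V^{\Pi}(s)$ by inserting an intermediate hierarchical policy and controlling the two levels in turn. I would write
\[
V^{\Pi^*}(s)-V^{\Pi}(s)=\big(V^{\pi^*_h,\pi^*_l}(s)-V^{\pi^*_h,\pi_l}(s)\big)+\big(V^{\pi^*_h,\pi_l}(s)-V^{\pi_h,\pi_l}(s)\big),
\]
so the first bracket isolates the low-level controller mismatch under a fixed (optimal) subgoal stream, while the second isolates the high-level subgoal generator under a fixed (learned) low-level controller. Both brackets are then expanded through the joint value function, grouping timesteps into blocks of length $k$ and using $\sum_{i}\gamma^{ik}\sum_{j=0}^{k-1}\gamma^{j}=\tfrac{1}{1-\gamma}$ to account for the full horizon.

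For the first bracket I would apply a simulation-lemma argument: with the subgoal stream fixed, the only discrepancy is the one-step action gap between $\pi^*_l$ and $\pi_l$ given a shared subgoal $g$, which is controlled in total variation by $\epsilon^g_{\pi^*_l,\pi_l}$. Bounding the reward by $r_{max}$, summing the induced state-distribution shift yields one factor $\tfrac{1}{1-\gamma}$, and a second factor arises because the perturbed downstream value is itself $O(r_{max}/(1-\gamma))$; since the subgoals here are drawn from $\pi^*_h$, a change of measure $\mathbb{E}_{g\sim\pi^*_h}[\,\cdot\,]=\mathbb{E}_{g\sim\pi_h}[(\pi^*_h/\pi_h)\,\cdot\,]$ rewrites this contribution as the $\mathbb{E}_{g\sim\pi_h}(\pi^*_h/\pi_h)\epsilon^g_{\pi^*_l,\pi_l}$ piece of term (i).

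For the second bracket the subgoal issued at each boundary changes while the controller stays $\pi_l$. The key device is to compare each per-subtask return against an idealized return in which the low level reaches its issued subgoal exactly, i.e. $\psi(s_{(i+1)k})=g_{(i+1)k}$. The gap between the actual and idealized return is precisely what $\mathcal{R}^{\pi_h,\pi_l}_i$ measures, since the intrinsic reward is the subgoal distance and $\mathcal{R}$ is its terminal-to-initial ratio; bounding every subtask by $\mathcal{R}^{\pi_h,\pi_l}_{max}$ and absorbing the discounted handoff error at subtask transitions over the horizon produces $2(\mathcal{R}^{\pi_h,\pi_l}_{max}+2\gamma^{k})$, the $2\gamma^{k}$ tracking the mismatch accrued when one subtask's terminal state seeds the next subtask under the two differing subgoal generators. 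A triangle-inequality step comparing the learned and idealized trajectories here also leaves a second copy of the low-level gap, now naturally weighted under $g\sim\pi_h$, which combines with the change-of-measure copy above into the symmetric $(1+\pi^*_h/\pi_h)$ factor.

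Collecting the two brackets under the common $\tfrac{2r_{max}}{(1-\gamma)^2}$ prefactor, with the $(1+\gamma)$ multiplier on term (i) coming from the adjacent-block discount bookkeeping, yields the stated bound. I expect the second bracket to be the main obstacle: rigorously tying the abstract ratio $\mathcal{R}^{\pi_h,\pi_l}_i$—defined purely through subgoal distances—to a bound on an environmental-reward value difference requires a Lipschitz/boundedness link between $r$ and the state-subgoal geometry, and the importance ratio $\pi^*_h/\pi_h$ must be kept finite, i.e. $\pi_h$ must cover the support of $\pi^*_h$.
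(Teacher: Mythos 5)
Your overall strategy matches the paper's in its two key mechanisms—a simulation-lemma argument in total variation for the low-level mismatch, and a reachability-based argument tying the environmental return to the subgoal-distance ratio for the high-level mismatch—but your decomposition differs. The paper inserts \emph{two} anchors and splits the difference into three pieces, $L_1=V^{\pi^*_h,\pi^*_l}-V^{\pi^*_h,\pi_l}$, $L_2=V^{\pi^*_h,\pi_l}-V^{\pi_h,\pi^*_l}$, and $L_3=V^{\pi_h,\pi^*_l}-V^{\pi_h,\pi_l}$. Your first bracket is exactly $L_1$ and is handled the same way (a $k$-step Bellman recursion, the Markov-chain TV bound giving a factor $t\epsilon^g_{\pi^*_l,\pi_l}$, and the change of measure $\mathbb{E}_{g\sim\pi^*_h}[\cdot]=\mathbb{E}_{g\sim\pi_h}[(\pi^*_h/\pi_h)\cdot]$ producing the $\pi^*_h/\pi_h$ piece). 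Your second bracket equals $L_2+L_3$ lumped together, and this is where the proposal has a concrete problem: you claim that a triangle-inequality step inside this bracket ``leaves a second copy of the low-level gap, now naturally weighted under $g\sim\pi_h$,'' but the bracket $V^{\pi^*_h,\pi_l}-V^{\pi_h,\pi_l}$ contains the \emph{same} low-level policy $\pi_l$ on both sides, so there is no mechanism for $\epsilon^g_{\pi^*_l,\pi_l}$ to appear there. In the paper that term genuinely comes from the separate piece $L_3$, which compares $\pi^*_l$ against $\pi_l$ under subgoals drawn from $\pi_h$; without inserting the cross-policy anchor $(\pi_h,\pi^*_l)$ you either lose the ``$1+$'' part of the symmetric factor (harmless, since a tighter bound still implies the stated one) or you must justify it by a different route than the one you sketch.

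The second, more substantive gap is the reachability step itself, and you correctly anticipate it: the ratio $\mathcal{R}^{\pi_h,\pi_l}_i$ is defined through subgoal distances, while the quantity being bounded is an environmental-return difference. The paper closes this via an explicit assumption (Assumption~\ref{reward_assumption}) that the environmental reward factors as a low-level reward divided by $\mathcal{D}(g,\hat g)$, so that the $k$-step subtask return telescopes into $r(s_0,a_0,\hat g)\frac{1-\gamma^k}{1-\gamma}\,\mathcal{R}^{\pi_h,\pi_l}_i$; it additionally invokes the property that the induced optimal pair $\Pi^*$ achieves the best reachability ($\mathcal{R}^{\pi^*_h,\pi_l}_0-\mathcal{R}^{\pi_h,\pi^*_l}_0\leq\mathcal{R}^{\pi_h,\pi_l}_0-\mathcal{R}^{\pi^*_h,\pi^*_l}_0$) to drop the optimal-policy reachability terms, and bounds the residual subgoal-distribution mismatch at each handoff crudely by $2\gamma^{ik}r_{max}/(1-\gamma)$, which is where the $2\gamma^k$ originates (consistent with your reading). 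Your ``idealized return in which the low level reaches its subgoal exactly'' is a reasonable substitute device, but as written it bounds each value separately rather than their difference, and it would force you to control $\mathcal{R}^{\pi^*_h,\pi_l}$ as well as $\mathcal{R}^{\pi_h,\pi_l}$ by $\mathcal{R}^{\pi_h,\pi_l}_{max}$—which again requires the same monotonicity-of-reachability property the paper assumes. So the proposal is a viable skeleton, but completing it requires importing the paper's reward-structure assumption and the optimal-reachability property, and repairing the attribution of the $(1+\pi^*_h/\pi_h)$ factor by adding the third anchor term.
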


Please refer to Appendix~\ref{appendix:omitter_proof} for the detailed proof. As shown in Equation~(\ref{PDL_bound_C}), the performance difference bound consists of two parts: (i) hierarchical policies' inconsistency and (ii) subgoal reachability penalty. Of these, the former indicates the difference between the currently learned hierarchical policies $\pi_h$ and $\pi_l$ and the optimal hierarchical policies $\pi^*_h$ and $\pi^*_l$. Since $(1+\pi^*_h/\pi_h)$ and $\epsilon^g_{\pi^*_l,\pi_l}$ are decoupled from each other, this inspires us to optimize the high and low hierarchical policies separately to reduce the policies' inconsistency and improve the performance of the policies. More importantly, the core difference from previous work is that the subgoal reachability penalty matters, which requires reduction from both high- and low-level policies, thus we integrate it into the optimization procedures of the two levels.

\section{Experiment}
Our experimental evaluation aims to investigate the following questions: 
1) How does BrHPO's performance on long-term goal-conditioned benchmark tasks compare to that of state-of-the-art counterparts in terms of sample efficiency and asymptotic performance? 
2) How effective is the mutual response mechanism in enhancing subgoal reachability and improving performance?
\begin{figure*}[t]
\centering
\begin{subfigure}[t]{0.14\textwidth}
    \centering
    \includegraphics[width=\textwidth]{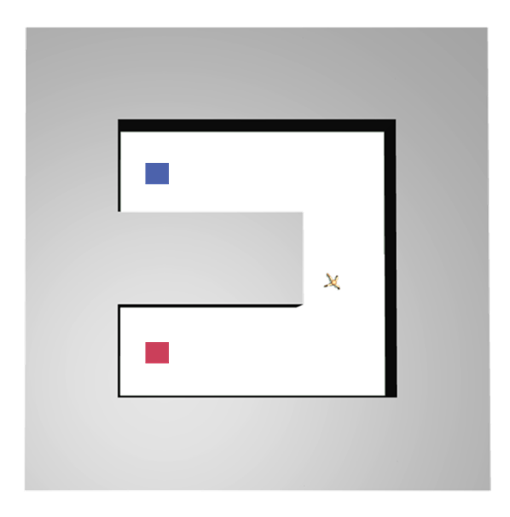}
    \caption{AntMaze}
\end{subfigure}
\begin{subfigure}[t]{0.22\textwidth}
    \centering
    \includegraphics[width=\textwidth]{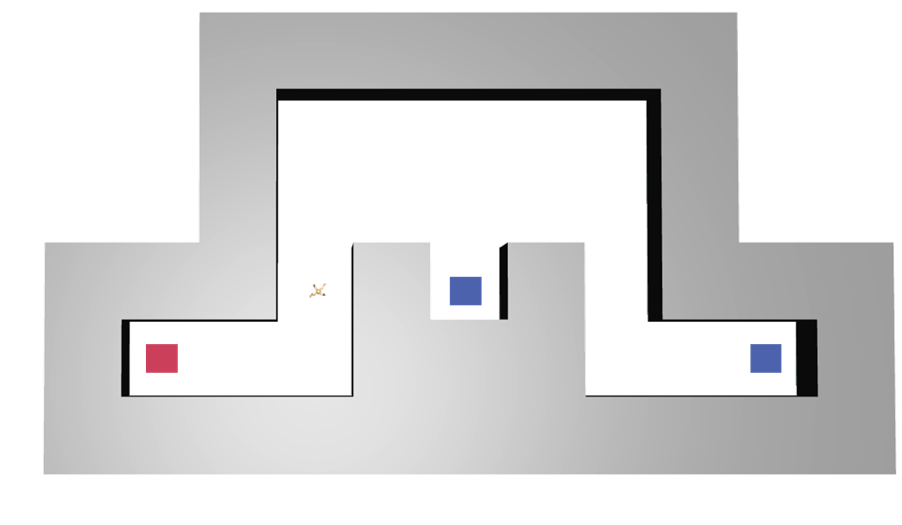}
    \caption{AntBigMaze}
\end{subfigure}
\begin{subfigure}[t]{0.14\textwidth}
    \centering
    \includegraphics[width=\textwidth]{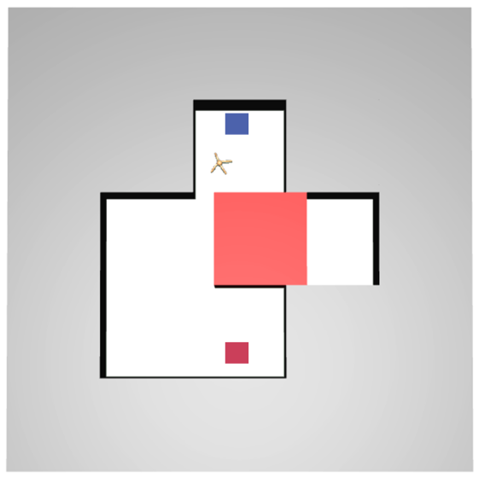}
    \caption{AntPush}
\end{subfigure}
\begin{subfigure}[t]{0.12\textwidth}
    \centering
    \includegraphics[width=\textwidth]{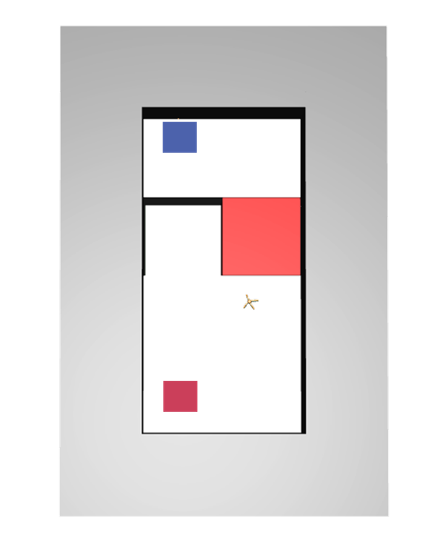}
    \caption{AntFall}
\end{subfigure}
\begin{subfigure}[t]{0.17\textwidth}
    \centering
    \includegraphics[width=\textwidth]{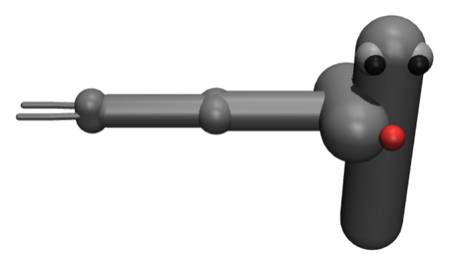}
    \caption{Reacher3D}
\end{subfigure}
\begin{subfigure}[t]{0.17\textwidth}
    \centering
    \includegraphics[width=\textwidth]{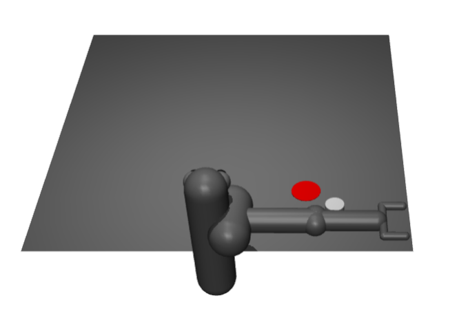}
    \caption{Pusher}
\end{subfigure}
\caption{Environments used in our experiments. In maze tasks, the red square indicates the start point and the blue square represents the target point. In manipulation tasks, a robotic arm aims to make its end-effector and (puck-shaped) grey object reach the target position, which is marked as a red ball, respectively.}
\label{env}
\end{figure*}
\paragraph{Experimental setup}
We evaluate BrHPO on two categories of challenging long-horizon continuous control tasks, which feature both 
\textit{dense} and \textit{sparse} environmental reward, as illustrated in Figure~\ref{env}. 
In the maze navigation environments, the reward is determined by the 
negative $\mathcal{L}_2$ distance between the current state and the target position within the goal space. In the robotics manipulation environments with sparse rewards, 
the reward is set to $0$ when the distance is below a predefined threshold; otherwise, it's set to $-1$. Task success is defined as achieving a final distance to the target 
point of $d\leq5$ for the maze tasks and $d\leq0.25$ for the manipulation tasks. 
To ensure a fair comparison, all agents are initialized at the same position, eliminating extra 
environmental information introduction from random initialization~\citep{lee2022dhrl}. Detailed settings can be found in Appendix \textcolor{mydarkblue}{B}.

\subsection{Comparative evaluation}
We compared BrHPO with the following baselines.
1) \textit{HIRO}~\citep{nachum2018data}: designed an off-policy correction mechanism which required high-level experience to obey the current low-level policy;
2) \textit{HIGL}~\citep{kim2021landmark}: relied on the off-policy correction mechanism and introduced a $k$-step adjacent constraint~\citep{zhang2020generating} and the novelty to discover appropriate subgoals;
3) \textit{RIS}~\citep{chane2021goal}: utilized the hindsight method to generate the least-cost middle points as subgoals, forcing the low-level policy to follow the given subgoals;
4) \textit{CHER}~\citep{kreidieh2019inter}: considered the cooperation of hierarchical policies, and the high-level policy needs to care about the low-level behaviour per step;
5) \textit{SAC}~\citep{haarnoja2018soft}: served as a benchmark of flat off-policy model-free algorithm and was applied as the backbone of BrHPO.
Simply put, HIRO and HIGL focused on low-level domination, and RIS focused on high-level domination. CHER also considers the cooperation of different level policies while it requires step-by-step consideration.

The learning curves of BrHPO and the baselines across all tasks are plotted in Figure~\ref{env_result}.
Overall, the results demonstrate that BrHPO outperforms all baselines both in exploration efficiency and asymptotic performance.
In particular, when dealing with large-scale (AntBigMaze) and partially-observed environments (AntPush and AntFall), BrHPO achieves better exploration and training stability, benefitting from the mutual response mechanism with information sharing and error correction for both levels. 
In contrast, acceptable baselines like HIRO, HIGL and CHER exhibit performance fluctuations and low success rates.
It's worth noting that BrHPO can handle \textit{sparse} reward environments without any reward shaping or hindsight relabeling modifications, indicating that our proposed mechanism can capture serendipitous success and provide intrinsic guidance. Besides, we report the training wall-time in Appendix~\ref{appendix:training_time}, indicating that our method can achieve efficient computational performance, with training times comparable to a flat SAC policy.
Notably, compared to previous approaches that utilize adjacency matrices (HRAC) or graphs to model subgoal reachability (HIGL), our method achieves at least a \textbf{2x} improvement in training efficiency with performance guarantee.
\begin{figure*}[t] 
\centering
\begin{subfigure}[t]{0.32\textwidth}
    \centering
    \includegraphics[width=\textwidth]{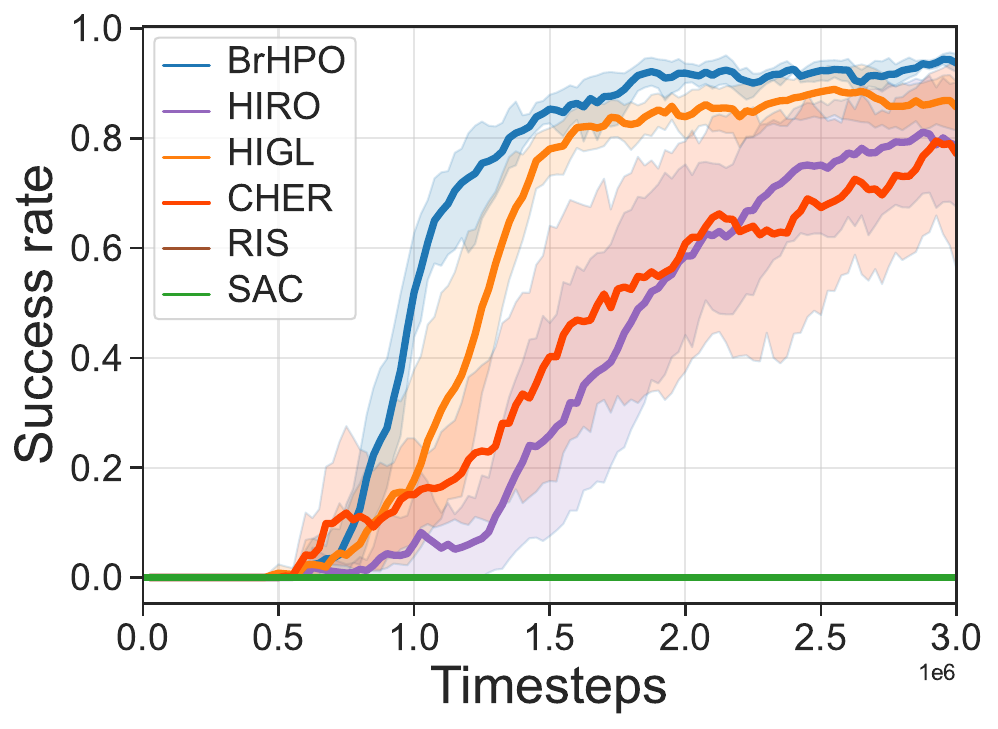}
    \caption{AntMaze}
\end{subfigure}
\begin{subfigure}[t]{0.32\textwidth}
    \centering
    \includegraphics[width=\textwidth]{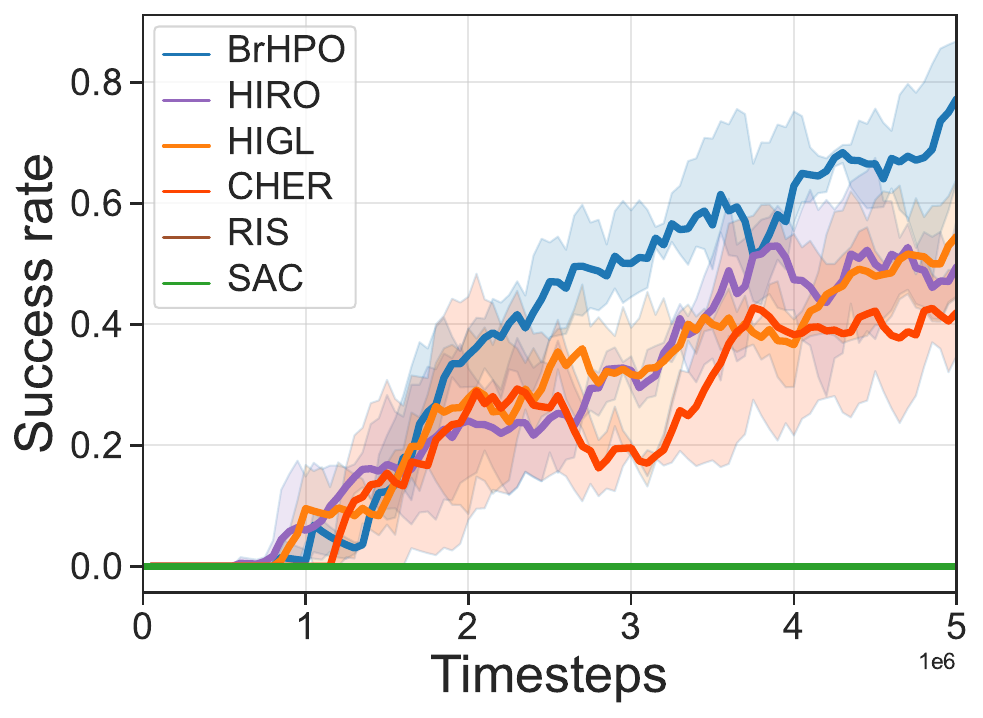}
    \caption{AntBigMaze}
\end{subfigure}
\begin{subfigure}[t]{0.32\textwidth}
    \centering
    \includegraphics[width=\textwidth]{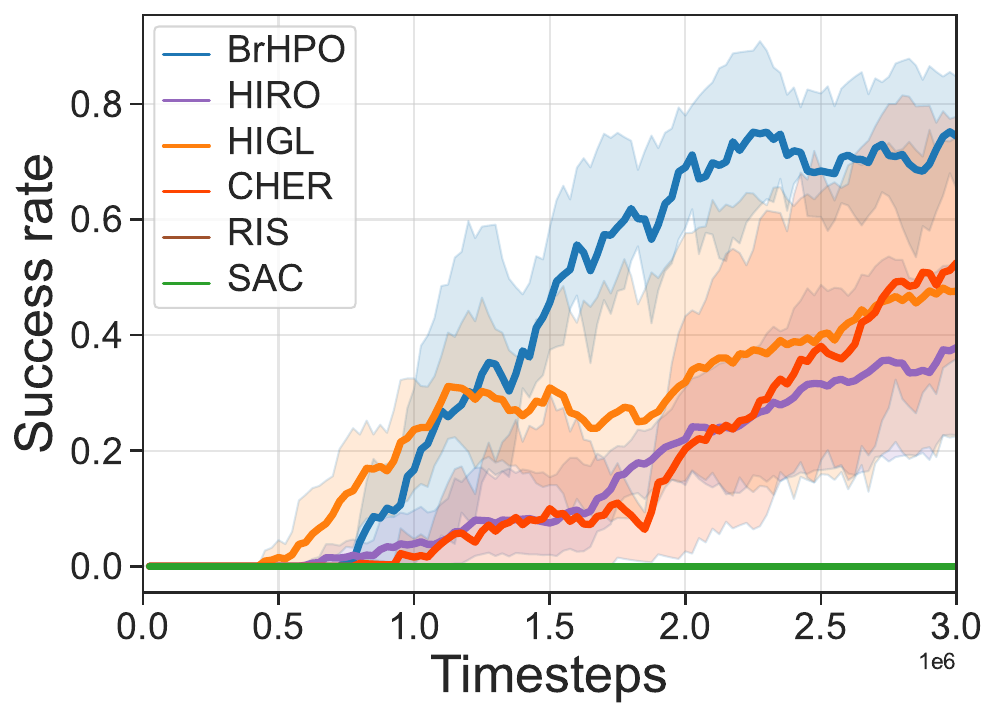}
    \caption{AntPush}
\end{subfigure}
\begin{subfigure}[t]{0.32\textwidth}
    \centering
    \includegraphics[width=\textwidth]{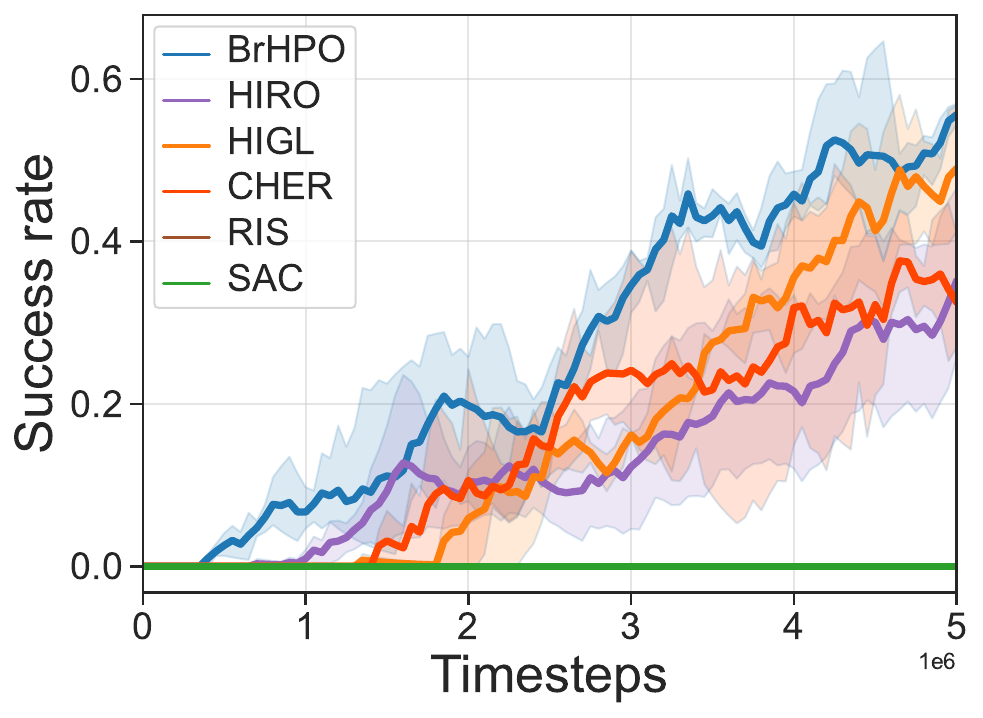}
    \caption{AntFall}
\end{subfigure}
\begin{subfigure}[t]{0.32\textwidth}
    \centering
    \includegraphics[width=\textwidth]{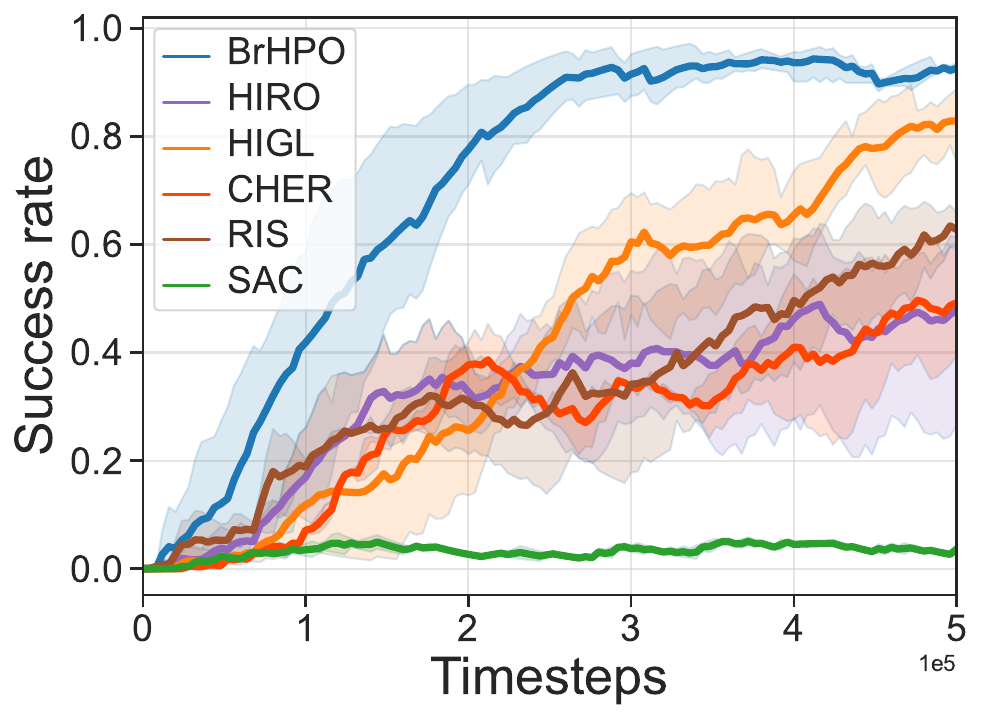}
    \caption{Reacher3D (Sparse)}
\end{subfigure}
\begin{subfigure}[t]{0.32\textwidth}
    \centering
    \includegraphics[width=\textwidth]{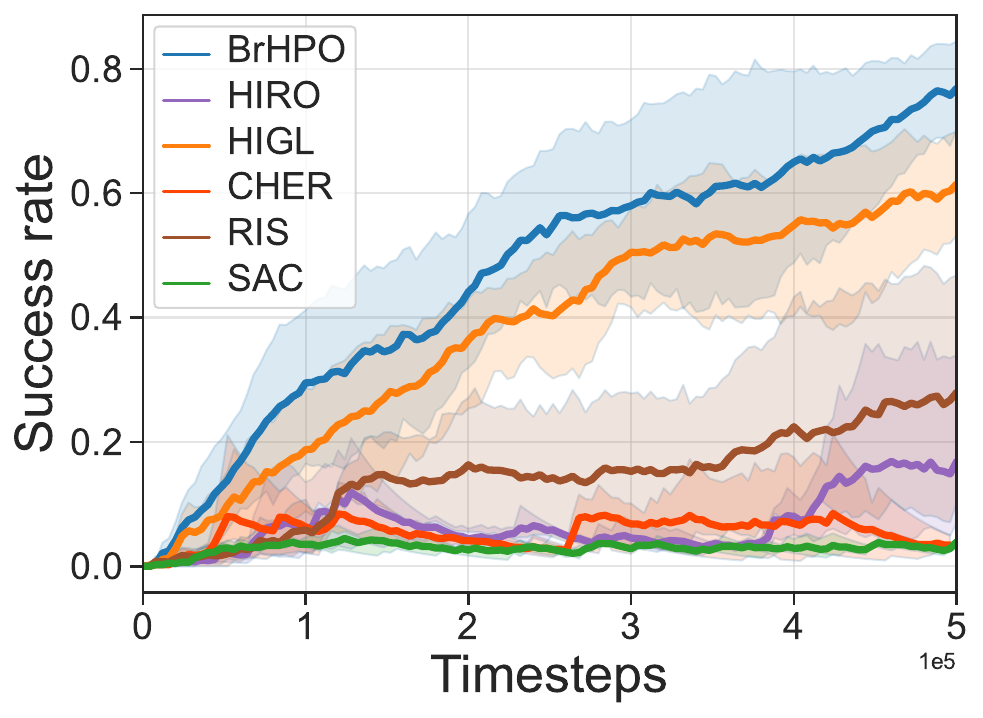}
    \caption{Pusher (Sparse)}
\end{subfigure}
\caption{The average success rate in various continuous control tasks of BrHPO and baselines. The solid lines are the average success rate, while the shades indicate the standard error of the average performance. All algorithms are evaluated with $5$ random seeds.}
\label{env_result}
\end{figure*}
\begin{figure*}[t]
  \centering
  \begin{subfigure}[t]{0.24\textwidth}
    \centering
    \includegraphics[width=\textwidth]{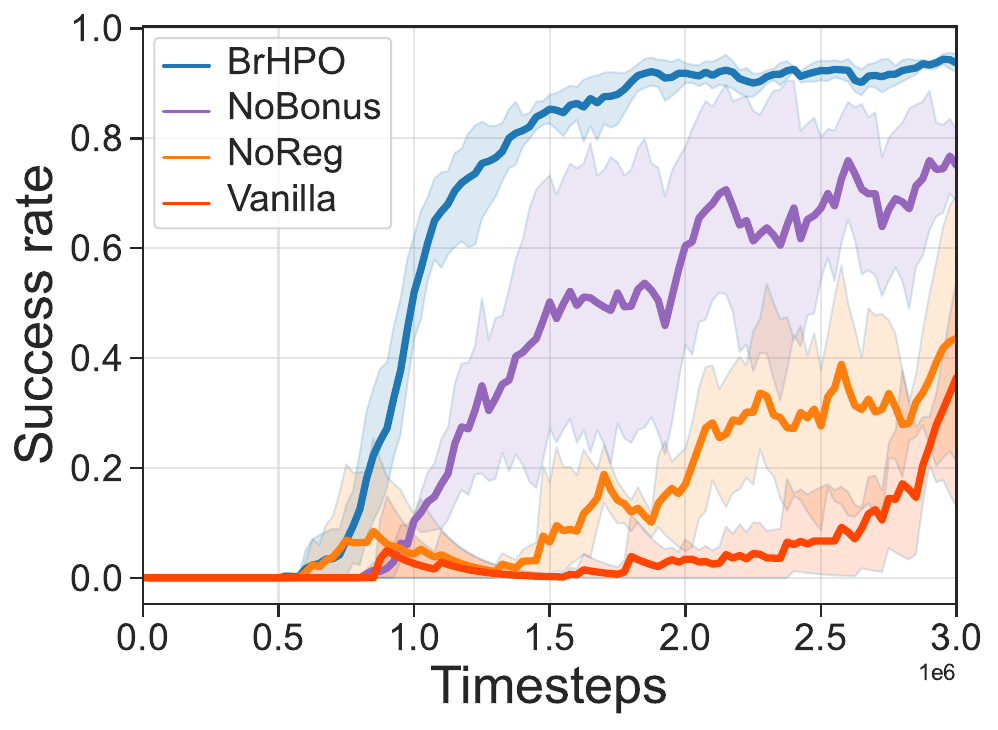}
    \caption{AntMaze}
  \end{subfigure}
  \begin{subfigure}[t]{0.24\textwidth}
    \centering
    \includegraphics[width=\textwidth]{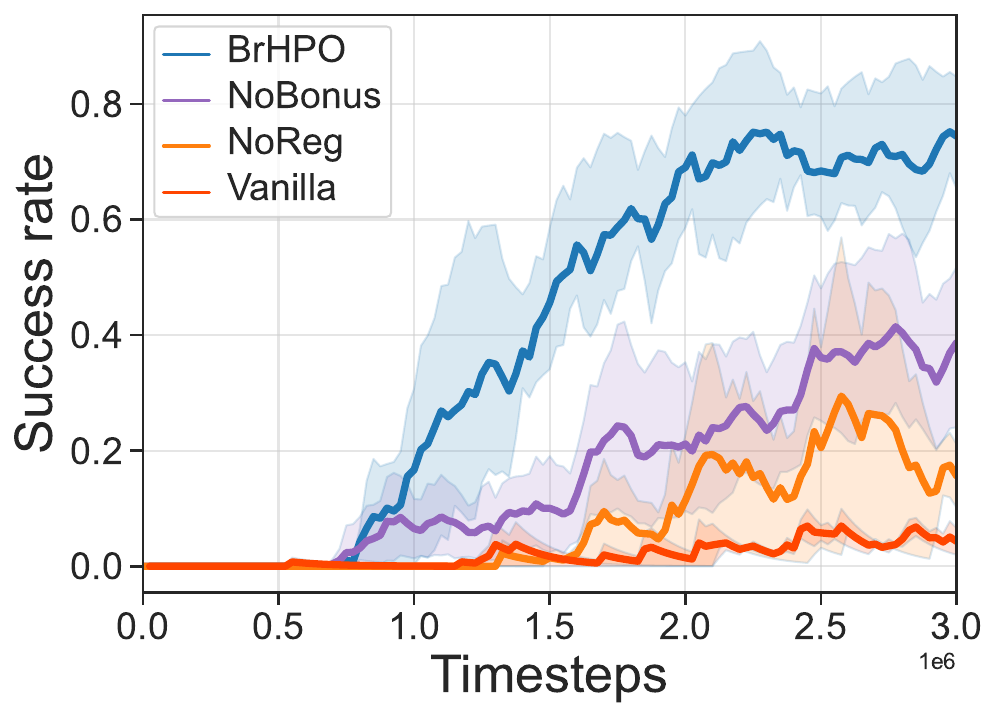}
    \caption{AntPush}
  \end{subfigure}
  \hfill
  \begin{subfigure}[t]{0.49\textwidth}
    \centering
    \includegraphics[width=\textwidth]{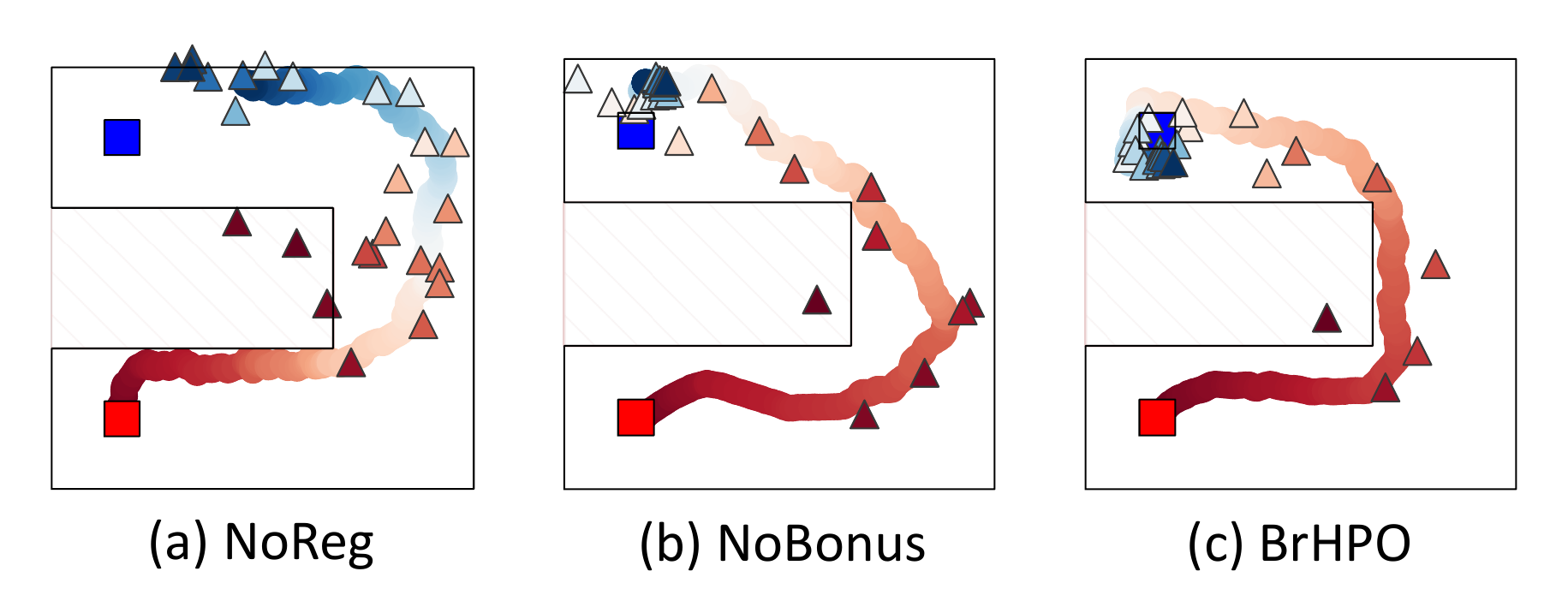}
    \caption{State-subgoal trajectory visualization}
  \end{subfigure}
  \caption{The performance and state-subgoal trajectory visualization from different BrHPO variants.}
  \label{ablation_variants}
\vspace{-0.3cm}
\end{figure*}
\begin{figure*}[t]
  \centering
  \begin{minipage}[t]{.49\textwidth}
    \centering
    \begin{subfigure}[t]{0.49\textwidth}
      \centering
      \includegraphics[width=\textwidth]{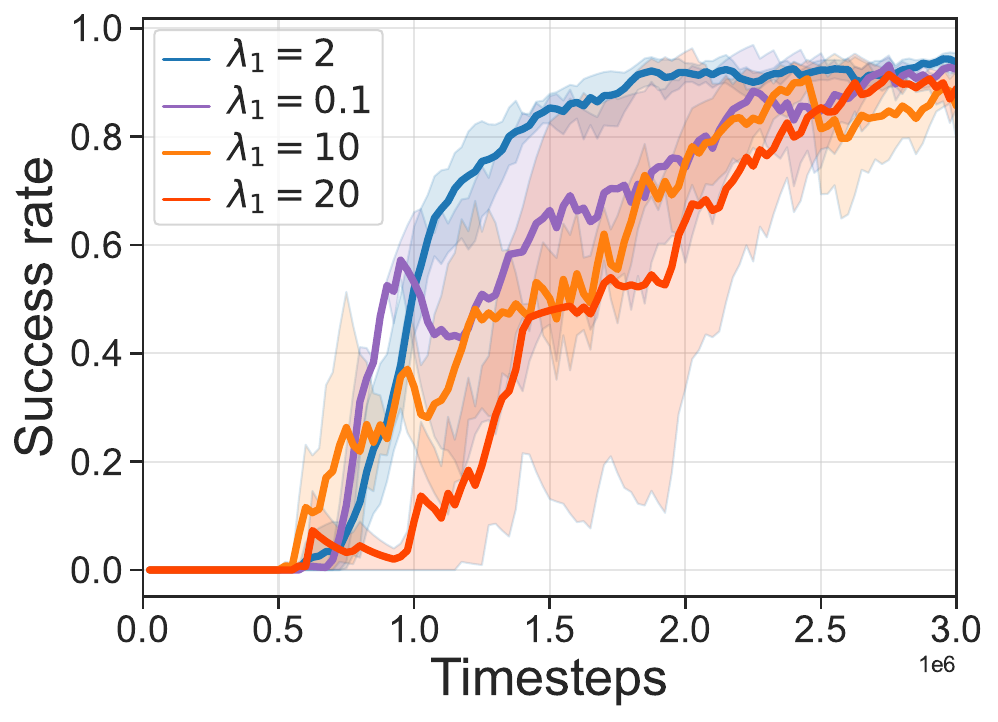}
      \caption{Ablation on $\lambda_1$}
    \end{subfigure}
    \begin{subfigure}[t]{0.49\textwidth}
      \centering
      \includegraphics[width=\textwidth]{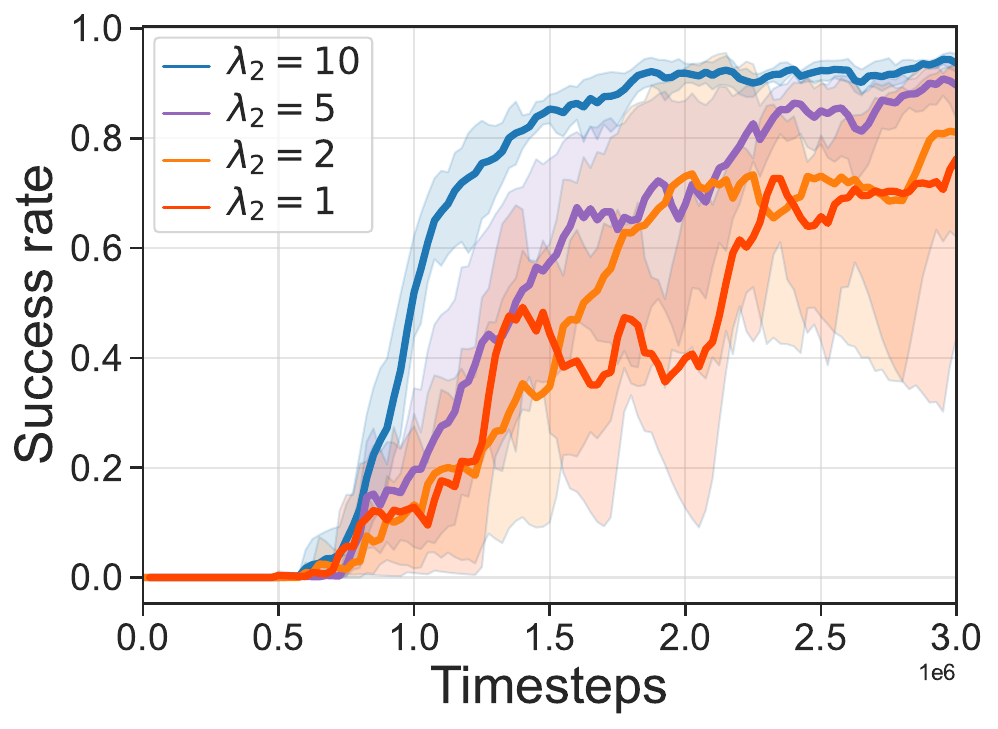}
      \caption{Ablation on $\lambda_2$}
    \end{subfigure}
    \caption{The learning curves with different weight factors $\lambda_1$ and $\lambda_2$ by AntMaze task.}
    \label{ablation_para}
  \end{minipage}
  \hfill
  \begin{minipage}[t]{0.49\textwidth}
    \centering
    \begin{subfigure}[t]{0.49\textwidth}
        \centering
        \includegraphics[width=\textwidth]{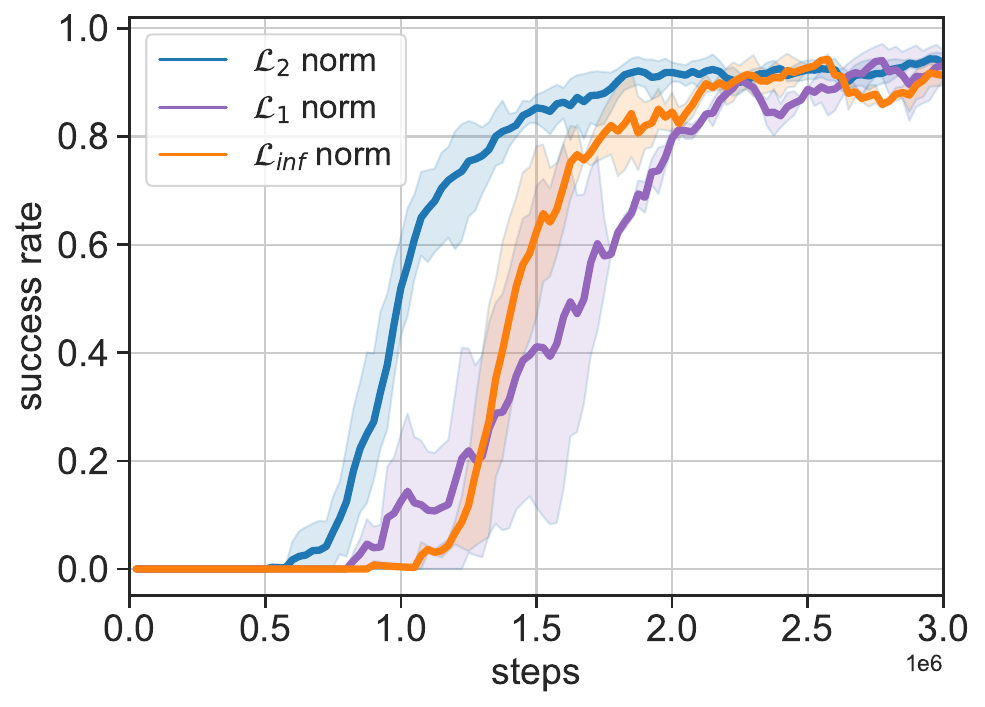}
        \caption{Diff. distance norms}
        \label{different_distance}
    \end{subfigure}
    \begin{subfigure}[t]{0.49\textwidth}
        \centering
        \includegraphics[width=\textwidth]{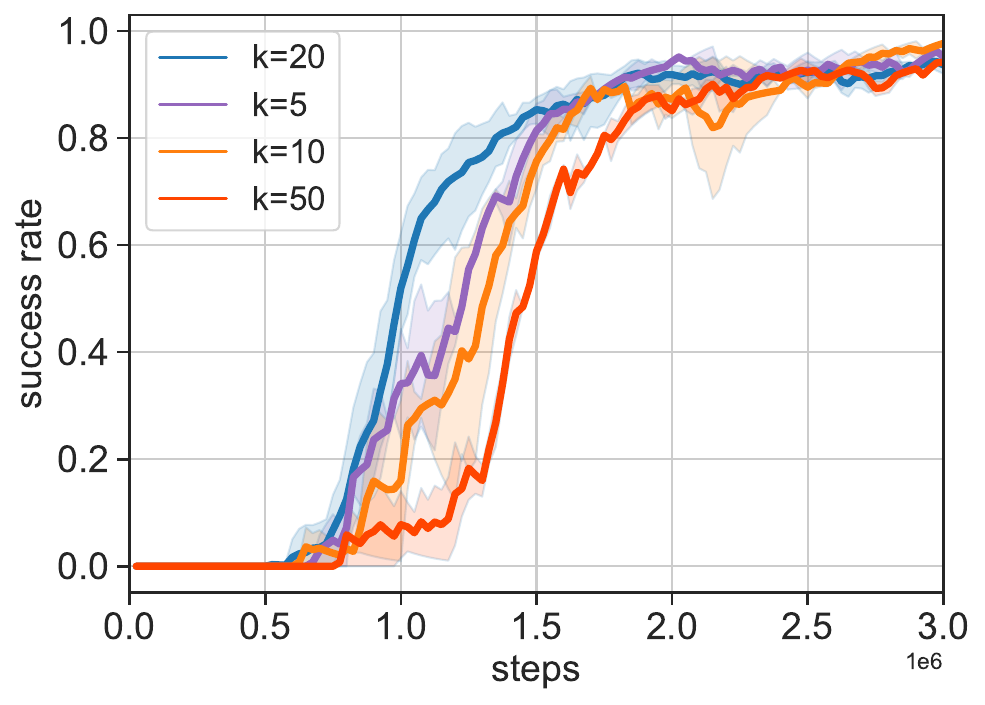}
        \caption{Diff. subtask horizons}
        \label{different_horizon}
    \end{subfigure}
    \caption{The learning curves from different $\mathcal{D}$ and $k$ to verify the robustness of the mechanism.}
\end{minipage}
\end{figure*}

\begin{figure*}[t]
    \begin{minipage}[b]{0.49\textwidth}
        \centering
        \includegraphics[width=\linewidth]{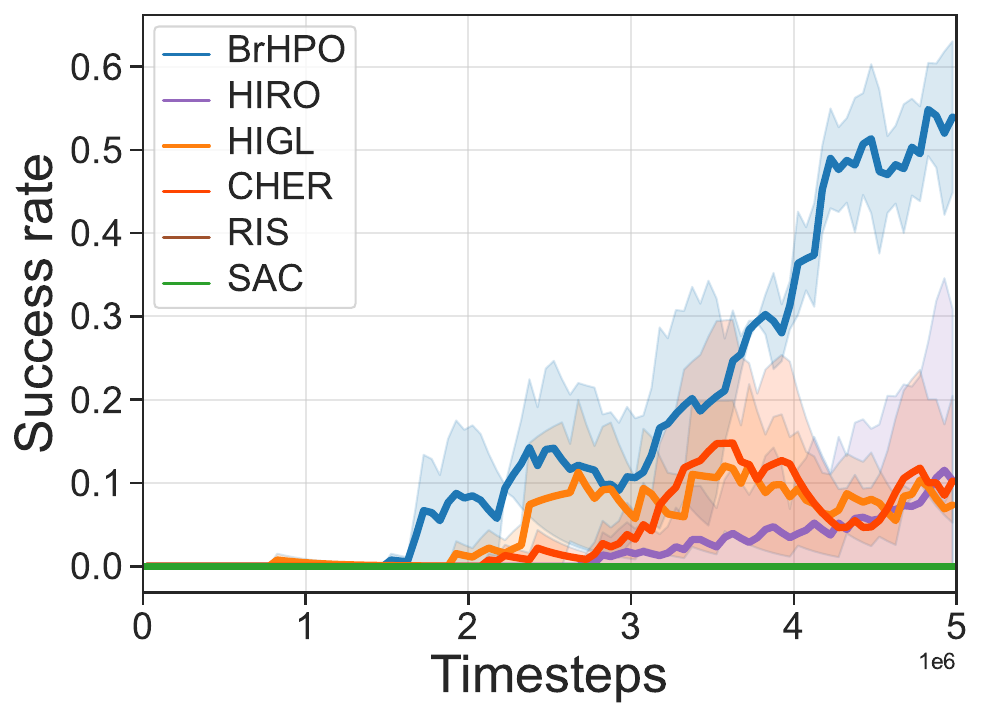}
        \caption{Learning curves of all methods. Mean and std by 4 runs.}
        \label{fig:learning_curves}
    \end{minipage}
    \hfill
    \begin{minipage}[b]{0.49\textwidth}
        \centering
        \subcaptionbox{Visualization of BrHPO.}{\includegraphics[width=\linewidth]{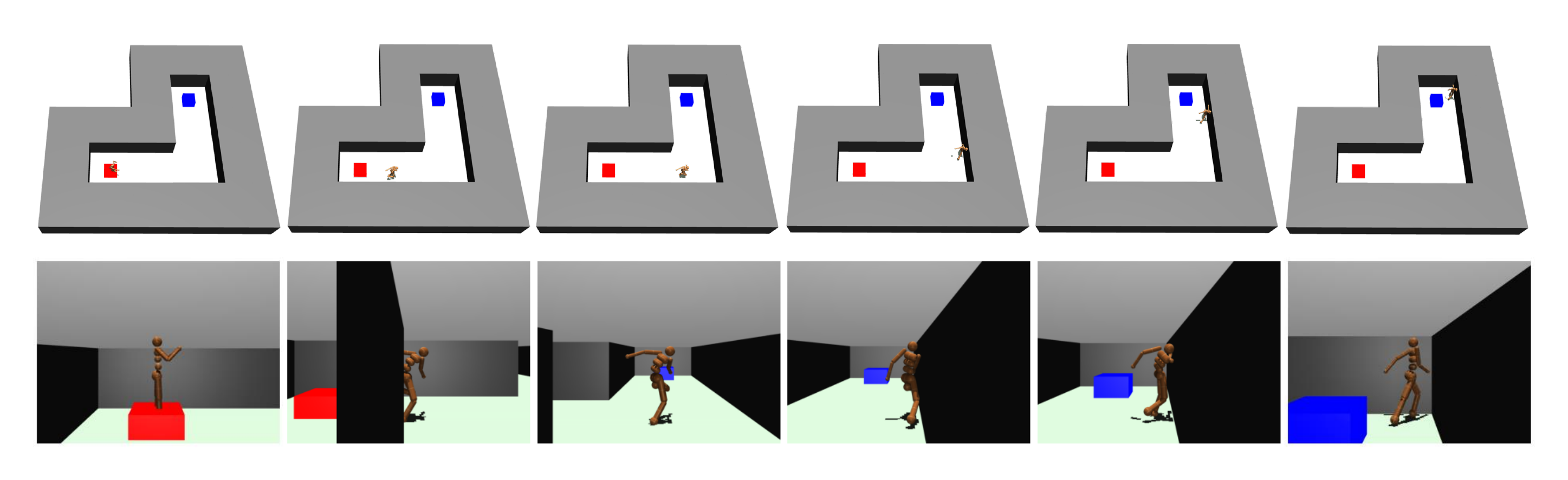}} \\
        \subcaptionbox{Visualization of HIRO.}{\includegraphics[width=\linewidth]{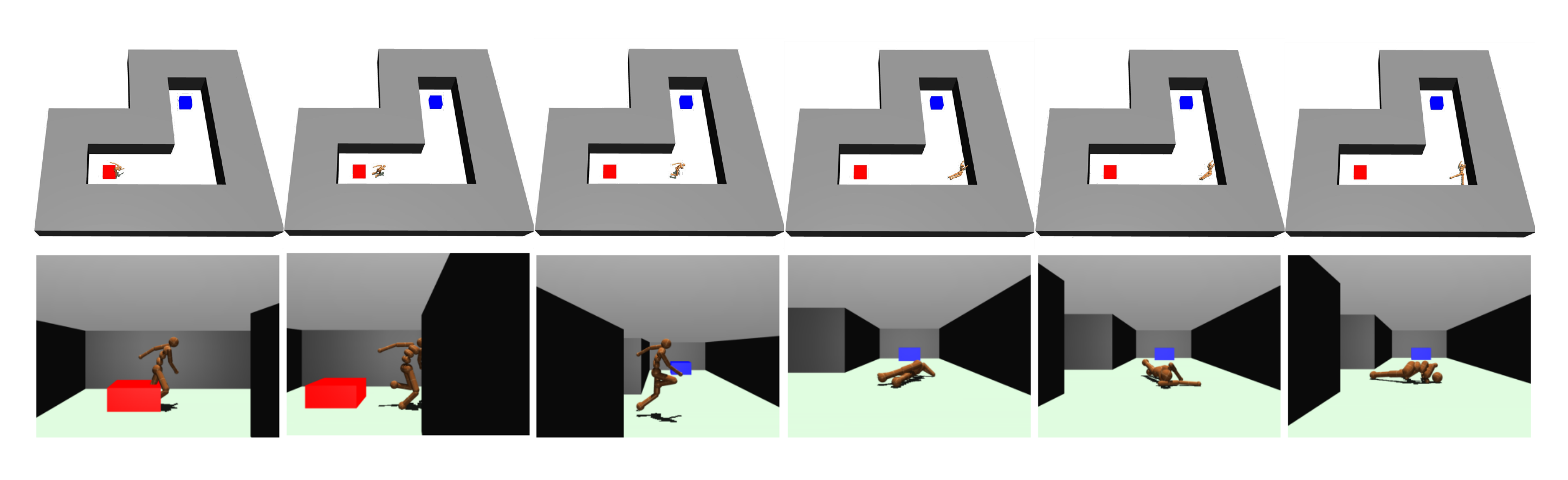}}
        \caption{The behavior comparison between BrHPO and HIRO by HumanoidMaze.}
        \label{fig:humanoidmaze_result}
    \end{minipage}
\end{figure*}

\subsection{Ablation study}\label{Ablation}
Next, we make ablations and modifications to our method to validate the effectiveness and robustness of the mechanism we devised.

\paragraph{Ablation on design choices.}
To investigate the effectiveness of each component, we compared BrHPO with several variants through AntMaze and AntPush tasks.
The BrHPO variants include, 
1) \textit{Vanilla}, which removes the mutual response mechanism in both-level policies, resulting in $\pi_h$ and $\pi_l$ being trained solely by conventional SAC;
2) \textit{NoReg}, which keeps the low-level reward bonus but disables the regularization term in high-level policy training;
3) \textit{NoBonus}, where only the high-level policy concerns subgoal reachability but the low-level reward bonus is removed.

The learning curves and state-subgoal trajectory visualizations from different variants are presented in Figure~\ref{ablation_variants}.
BrHPO outperforms all three variants by a significant margin, highlighting the importance of the mutual response mechanism at both levels. 
Interestingly, the \textit{NoBonus} variant achieves better performance than the \textit{NoReg} variant, suggesting that the subgoal reachability addressed by the high-level policy has a greater impact on overall performance. 
This observation is further supported by the trajectory visualization results.

\paragraph{Hyperparameters.}
We empirically studied the sensitivity of weight factors $\lambda_1$ and $\lambda_2$ in Figure~\ref{ablation_para}.
The results show that $\lambda_1$ and $\lambda_2$ within a certain range are acceptable.
Upon closer analysis, we observed that when $\lambda_1$ is too small, the regularization term in high-level policy optimization has minimal influence. 
Consequently, the high-level policy tends to disregard the performance of the low-level policy during tuning, resembling a high-level dominance scenario. 
Conversely, when $\lambda_1$ is too large, the high-level policy overly prioritizes subgoal reachability, diminishing its exploration capability and resembling a low-level dominance scenario.
These observations validate the effectiveness of the mutual response mechanism in maintaining a balanced interaction between the high- and low-level policies. 
Additionally, the results for $\lambda_2$ suggest that a larger value can generally improve subgoal reachability from the perspective of the low-level policy, leading to performance improvements and enhanced stability.

\paragraph{Robustness of mutual response mechanism.}
We conducted additional experiments on the AntMaze task to verify the robustness of the proposed mechanism.
The computation of subgoal reachability, a key factor in the mutual response mechanism, depends on the choice of the distance measurement $\mathcal{D}$ and the subtask horizon $k$.
To test the distance measurement $\mathcal{D}$, we compared three distance functions: $\mathcal{L}_2$ norm, $\mathcal{L}_\infty$ norm, and $\mathcal{L}_1$ norm. 
Figure~\ref{different_distance} shows that our method performs well regardless of the distance function used, highlighting the adaptability of the proposed mechanism.
Additionally, we varied the subtask horizon by setting $k=5,10,20,50$ (Figure~\ref{different_horizon}).

Surprisingly, we achieved success rates of around 0.9 with different subtask horizons, indicating that the performance is robust to variations in the subtask horizon, only with a slight effect on the convergence speed during training. 
This flexibility of BrHPO in decoupling the high- and low-level horizons without the need for extra graphs, as required in DHRL~\citep{lee2022dhrl}, is noteworthy. More ablations by Reacher3D task are provided in Figure~\ref{Reacher3D_ablation_variants} of Apppendix~\ref{sec:additional_results}.

In addition to evaluating parametric robustness, we subjected BrHPO to testing in stochastic environments to further evaluate its robustness. As depicted in Figure~\ref{stochastic_ablation_variants} of Apppendix~\ref{sec:additional_results}, we introduce varying levels of Gaussian noise into the state space. The results demonstrate our BrHPO can effectively mitigate the impact of noise and ensure consistent final performance.

\paragraph{Mutual response mechanism in complex tasks.}
Except for the main results, we consider a more complex robot in a maze, HumanoidMaze, to further evaluate the mutual response mechanism. In this task, the simulated humanoid, where the state space contains $\mathbf{274}$ dimensions and the action space is $\mathbf{17}$, needs to maintain body balance while being guided by the subgoal from the high-level policy. Consequently, the low-level policy necessitates extensive training to facilitate the humanoid's ability to learn how to walk proficiently. This training process requires the high-level policy to exhibit ``patience'', gradually adjusting the subgoals to guide the humanoid's progress effectively. Figure~\ref{fig:learning_curves} demonstrates the performance comparison, which showcases the superior advantage of BrHPO over HIRO. We additionally visualize the trajectory in Figure~\ref{fig:humanoidmaze_result}. We find that, our mutual response mechanism can encourage cooperation between the high- and the low-level policies, while the erroneous guidance from HIRO makes it difficult for humanoid to maintain balance and easily fall, thus failing the task.

\section{Related Works}
Hierarchical Reinforcement Learning (HRL) methods have emerged as promising solutions for addressing long-horizon complex tasks, primarily due to the synergistic collaboration between high-level task division and low-level exploration~\citep{jong2008utility,haarnoja2018latent,nachum2019does,pateria2021hierarchical,eppe2022intelligent}. 
Generally, HRL methods can be broadly categorized into two groups, option-based HRL~\citep{sutton1999between,precup1998theoretical,zhanghierarchical,mannor2004dynamic} and subgoal-based HRL~\citep{dayan1992feudal,nachum2019does,campos2020explore,li2021learning,islam2022discrete}, that highlights the scope of guidance provided by the high-level policy. 
The first avenue involves the use of options to model the policy-switching mechanism in long-term tasks, which provides guidance to the low-level policy on when to terminate the current subtask and transition to a new one~\citep{machado2017laplacian,zhang2019dac}. 
In contrast, the subgoal-based HRL avenue~\citep{vezhnevets2017feudal,nachum2018near,gurtler2021hierarchical,czechowski2021subgoal,li2021active} focuses on generating subgoals in fixed horizon subtasks rather than terminal signals, and our work falls under this category. Notably, subgoal-based HRL approaches prioritize subgoal reachability as a means of achieving high performance~\citep{stein2018learning,paul2019learning,li2020learning,czechowski2021subgoal,pateria2021end}. 

Various methods have been proposed to enhance subgoal reachability, from either the high-level or low-level perspectives. When the low level is considered to be dominant, several works have proposed relabeling or correcting subgoals based on the exploration capacity of the low-level policy. Examples include off-policy correction in HIRO~\citep{nachum2018data} and hindsight relabeling in HER~\citep{andrychowicz2017hindsight}, RIS~\citep{chane2021goal} and HAC~\citep{levylearning}. On the other hand, when the high-level dominates, subgoals are solved from given prior experience or knowledge, and the low-level policy is trained merely to track the given subgoals~\citep{savinovsemi,huang2019mapping,eysenbach2019search,jurgenson2020sub}. In contrast to the listed prior works, BrHPO proposes a mutual response mechanism for ensuring bidirectional reachability.
SoRB~\citep{eysenbach2019search}, for instance, constructs an environmental graph from the given replay buffer for high-level planning and uses the waypoints as subgoals. SGT~\citep{jurgenson2020sub} adopts a divide-and-conquer mechanism to search intermediate subgoals from given trajectories. 

Meanwhile, our method relates to previous research that encourages cooperation between the high-level policy and the low-level one, where they explored various techniques for modelling subgoal reachability, including $k$-step adjacency matrix~\citep{ferns2004metrics,castro2020scalable,zhang2020generating} or state-subgoal graph~\citep{zhang2018composable,kim2021landmark, lee2022dhrl}. However, these methods can be computationally intensive and conservative. Our proposed method provides a more computationally efficient and flexible approach to gain subgoal reachability. By avoiding an explicit representation of the state-subgoal adjacency, our method can be more easily deployed and applied to a variety of different environments.

\section{Conclusion}
In this work, we identify that bilateral information sharing and error correction have been long neglected in previous HRL works. This will potentially cause local exploration and unattainable subgoal generation, which hinders overall performance and sample efficiency. To address this issue, we delve into the mutual response of hierarchical policies, both theoretically and empirically, revealing the crucial role of the mutual response mechanism. Based on these findings, we proposed the Bidirectional-reachable Hierarchical Policy Optimization (BrHPO) algorithm. BrHPO not only matches the best HRL algorithms in asymptotic performance, but it also shines in low computational load.
Although BrHPO offers many advantages, a main challenge is to design an appropriate low-level reward to compute the subgoal reachability, thus limiting the application in sparse low-level reward settings~\citep{lee2022dhrl}. Future work that merits investigation are integrating up-to-date reachability measurement for bidirectional subgoal reachability and policy optimization backbone to develop a strong HRL algorithm.

\subsubsection*{Acknowledgments}
\label{sec:ack}
This work was done at Tsinghua University and supported
by the National Science and Technology Major Project (2021ZD0113801). We appreciate the reviewers' generous help to improve our paper.

\bibliography{main}
\bibliographystyle{rlc}

\clearpage
\appendix
\section{Theoretical Analysis}
\subsection{Omitted Proofs}\label{appendix:omitter_proof}
\begin{theorem}[Sub-optimal performance difference bound of HRL]
The performance difference bound $C$ between the induced optimal hierarchical policies $\Pi^*$ and the learned one $\Pi$ can be
\begin{align}
C(\pi_h,\pi_l)=
\frac{2r_{max}}{(1-\gamma)^2}\Bigg
[\underbrace{(1+\gamma)\mathbb{E}_{g\sim\pi_h}\left(1+\frac{\pi^*_h}{\pi_h}\right)\epsilon^g_{\pi^*_l,\pi_l}}_{\text{(\romannumeral1) hierarchical policies' inconsistency}}+\underbrace{2\left(\mathcal{R}^{\pi_h,\pi_l}_{max} + 2\gamma^k\right)}_{\text{(\romannumeral2) subgoal reachability penalty}}\Bigg],
\end{align}
where $\epsilon^g_{\pi^*_l,\pi_l}$ is the distribution shift between $\pi^*_l$ and $\pi_l$, and $\mathcal{R}^{\pi_h,\pi_l}_{max}$ is the maximum subgoal reachability penalty from the learned one $\Pi$, both of which are formulated as,
\begin{equation*}
\epsilon^g_{\pi^*_l,\pi_l}=\max_{s\in\mathcal{S},g\sim\pi_h}{\rm D}_{TV}\left(\pi^*_l(\cdot|s,g)\Vert\pi_l(\cdot|s,g)\right) \quad \text{and} \quad \mathcal{R}^{\pi_h,\pi_l}_{max}=\max_{i\in\mathbb{N}}\mathcal{R}^{\pi_h,\pi_l}_i.
\end{equation*}
\end{theorem}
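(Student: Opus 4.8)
\section*{Proof Proposal}

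The plan is to prove the bound by a performance-difference decomposition that separates the two sources of sub-optimality in the joint return of Equation~(\ref{Joint_Value_Function}): the approximation error between the learned and optimal policies at each level, and the intrinsic error of imperfect subgoal reaching. Introducing a hybrid policy that keeps one level optimal, I would telescope
\[
V^{\Pi^*}(s) - V^{\Pi}(s) = \bigl(V^{\pi^*_h,\pi^*_l}(s) - V^{\pi^*_h,\pi_l}(s)\bigr) + \bigl(V^{\pi^*_h,\pi_l}(s) - V^{\pi_h,\pi_l}(s)\bigr),
\]
so that the first bracket isolates the low-level action mismatch (with the subgoal law fixed at $\pi^*_h$) and the second isolates the high-level subgoal mismatch (with the low level fixed at $\pi_l$).

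For the first bracket the subgoals follow the same law in both terms, so the gap stems only from $\pi^*_l$ versus $\pi_l$ inducing different state-action occupancies. A standard simulation/telescoping argument bounds the per-step reward difference by $r_{max}$ times the accumulated total-variation distance between the induced trajectory distributions; as each step contributes at most $\epsilon^g_{\pi^*_l,\pi_l}$ and the mismatch compounds geometrically, summing the discounted series yields the $1/(1-\gamma)^2$ prefactor. A change of measure from $\pi^*_h$ back to $\pi_h$ then surfaces the importance ratio $\pi^*_h/\pi_h$ multiplying this $\epsilon^g_{\pi^*_l,\pi_l}$. The analogous treatment of the low-level response inside the second bracket contributes a $\pi_h$-weighted copy of $\epsilon^g_{\pi^*_l,\pi_l}$, i.e. the ``$1$'', so that the two copies combine into the factor $(1+\pi^*_h/\pi_h)$; the discount bookkeeping that pairs each subtask reward with the following subtask value supplies the $(1+\gamma)$ prefactor.

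The remaining discrepancy is the error that survives even when the policies are matched: conditioned on a subtask's initial state $s_{ik}$, the realized terminal state $s_{(i+1)k}$ generally does not coincide with the generated subgoal $g_{(i+1)k}$, and this endpoint gap is precisely what the bidirectional reachability $\mathcal{R}^{\pi_h,\pi_l}_i$ of Equation~(\ref{reachability_function}) quantifies. Bounding it uniformly by $\mathcal{R}^{\pi_h,\pi_l}_{max}$ and propagating the resulting initial-state discrepancy across subtask boundaries---where the discounting in Equation~(\ref{Joint_Value_Function}) contributes a factor $\gamma^k$ per subtask---produces the $2(\mathcal{R}^{\pi_h,\pi_l}_{max}+2\gamma^k)$ term. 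Collecting every contribution and factoring out the common $2r_{max}/(1-\gamma)^2$ then gives the stated $C(\pi_h,\pi_l)$.

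The hard part is the coupling between the levels: changing $\pi_h$ alters the subgoals, which reshape the low-level intrinsic reward of Equation~(\ref{lowreward}) and hence the trajectories $\pi_l$ actually produces, so the high- and low-level gaps are not genuinely independent. The reachability definition is what rescues tractability, since---as observed after Equation~(\ref{reachability_function})---conditioning on the initial state makes the reachability depend only on the terminal distance and not on the intermediate exploration, allowing each subtask's endpoint discrepancy to be handled uniformly. The delicate remaining points are ensuring $\pi^*_h$ is absolutely continuous with respect to $\pi_h$ so the change of measure is well defined, and carefully tracking the discount factors at subtask boundaries to recover the exact $\gamma^k$ and $(1+\gamma)$ constants instead of looser estimates.
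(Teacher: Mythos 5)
Your high-level strategy (hybrid-policy telescoping, per-step total-variation bounds accumulated over $k$-step subtasks, a $\gamma^k$ recursion across subtask boundaries) matches the spirit of the paper's argument, but your two-term decomposition has a genuine gap that prevents it from producing the stated bound. The paper splits the difference into \emph{three} terms,
$L_1=V^{\pi^*_h,\pi^*_l}-V^{\pi^*_h,\pi_l}$, $L_2=V^{\pi^*_h,\pi_l}-V^{\pi_h,\pi^*_l}$, and $L_3=V^{\pi_h,\pi^*_l}-V^{\pi_h,\pi_l}$. The factor $\bigl(1+\pi^*_h/\pi_h\bigr)\epsilon^g_{\pi^*_l,\pi_l}$ arises because $L_1$ and $L_3$ \emph{each} compare $\pi^*_l$ against $\pi_l$ under a fixed high-level policy ($\pi^*_h$ and $\pi_h$ respectively); the importance ratio converts the $\mathbb{E}_{g\sim\pi^*_h}$ weighting of $L_1$ into a $\mathbb{E}_{g\sim\pi_h}$ weighting, and $L_3$ supplies the ``$1$''. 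In your decomposition the second bracket $V^{\pi^*_h,\pi_l}-V^{\pi_h,\pi_l}$ holds the low-level policy fixed at $\pi_l$ on both sides, so it cannot contribute any copy of $\epsilon^g_{\pi^*_l,\pi_l}$: the discrepancy there is between $\pi_l(\cdot|s,g^*)$ and $\pi_l(\cdot|s,g)$ for subgoals drawn from different high-level laws, which is not controlled by $\max_{s,g}{\rm D}_{TV}(\pi^*_l\Vert\pi_l)$ at all. Your claimed origin of the ``$1$'' is therefore not a valid step.

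The second gap concerns the reachability term. In the paper it emerges from the cross-term $L_2$, and only after invoking Assumption~\ref{reward_assumption}, which ties the environmental reward to the ratio of low-level rewards so that the $k$-step return of a subtask can be rewritten as $r(s_0,a_0,\hat g)\tfrac{1-\gamma^k}{1-\gamma}\mathcal{R}^{\pi_h,\pi_l}_i$ (using the goal-conditioned monotonicity $r_l(s_j,a_j,g)\le r_l(s_k,a_k,g)$), together with the optimality property of $\Pi^*$ that lets $\mathcal{R}^{\pi^*_h,\pi_l}_0-\mathcal{R}^{\pi_h,\pi^*_l}_0$ be bounded by $\mathcal{R}^{\pi_h,\pi_l}_0$. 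Your sketch asserts that the endpoint gap ``is precisely what $\mathcal{R}^{\pi_h,\pi_l}_i$ quantifies'' and that propagating it yields $2(\mathcal{R}^{\pi_h,\pi_l}_{max}+2\gamma^k)$, but $\mathcal{R}$ is defined through the goal-space distance $\mathcal{D}$ while $V$ is an environmental return; without the reward-structure assumption there is no formal bridge between the two, and without a cross-term there is no place in your telescoping where the comparison against $\Pi^*$'s reachability can be exploited. To repair the argument you would need to restore the three-way split and the assumption linking $r$ to $r_l/\mathcal{D}$.
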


\textit{Summary of proof.} We first divide the bound into three parts, $V^{\Pi^*}(s)-V^{\Pi}(s)=\underbrace{V^{\pi^*_h,\pi^*_l}(s_0)-V^{\pi^*_h,\pi_l}(s_0)}_{L_1}+\underbrace{V^{\pi^*_h,\pi_l}(s_0)-V^{\pi_h,\pi^*_l}(s_0)}_{L_2}+ \underbrace{V^{\pi_h,\pi^*_l}(s_0)-V^{\pi_h,\pi_l}(s_0)}_{L_3}$. Then, we find the similarity of $L_1$ and $L_3$, both of which denote that under the same high-level policy ($\pi^*_h$ in $L_1$ while $\pi_h$ in $L_3$). By Performance Difference Lemma~\citep{kakade2002approximately}, we have 
\begin{equation}\label{eq:l1_l3_plus}
L_1 + L_3\leq\frac{2r_{max}}{(1-\gamma)^2}\mathbb{E}_{g\sim\pi_h}\left(1+\frac{\pi^*_h}{\pi_h}\right)\epsilon^g_{\pi^*_l,\pi_l}.
\end{equation}

For $L_2$, we follow~\citet{zhang2022adjacency} and substitute equation~(\ref{reachability_function}). Then we have
\begin{equation}\label{eq:l2}
L_2\leq\frac{r_{max}}{(1-\gamma)^2}(\mathcal{R}^{\pi_h,\pi_l}_{max}+2\gamma^k).
\end{equation}
Thus, we take the results of Equations~(\ref{eq:l1_l3_plus}) and (\ref{eq:l2}) and achieve the final bound.

\begin{proof}
To derive the performance difference bound between $\Pi^*$ and $\Pi$, we first divide the bound into three terms,
\begin{align}
V^{\Pi^*}(s_0)-V^{\Pi}(s_0)&=V^{\pi^*_h,\pi^*_l}(s_0)-V^{\pi_h,\pi_l}(s_0)\nonumber\\
&=\underbrace{V^{\pi^*_h,\pi^*_l}(s_0)-V^{\pi^*_h,\pi_l}(s_0)}_{L_1}+\underbrace{V^{\pi^*_h,\pi_l}(s_0)-V^{\pi_h,\pi^*_l}(s_0)}_{L_2}+ \underbrace{V^{\pi_h,\pi^*_l}(s_0)-V^{\pi_h,\pi_l}(s_0)}_{L_3}.
\end{align}

Then, our proof can be obtained to by tackling $L_1$, $L_2$ and $L_3$, respectively.

\begin{itemize}
  \item \textbf{Derivation of $L_1$}
\end{itemize}

By adding and subtracting the same term in $L_1$, we obtain
\begin{align}
L_1&=V^{\pi^*_h,\pi^*_l}(s_0)-\left[\widetilde{V}^{\pi^*_h,\pi_l^*}_0(s_0)+\gamma^k\mathbb{E}_{g\sim\pi^*_h,s\sim \mathbb{P}^{\pi_l^*,g}_k(\cdot|s_0)}V^{\pi^*_h,\pi_l}(s_k)\right]\nonumber \\
&\quad\quad+\left[\widetilde{V}^{\pi^*_h,\pi_l^*}_0(s_0)+\gamma^k\mathbb{E}_{g\sim\pi^*_h,s\sim \mathbb{P}^{\pi_l^*,g}_k(\cdot|s_0)}V^{\pi^*_h,\pi_l}(s_k)\right]-V^{\pi^*_h,\pi_l}(s_0)\nonumber \\
&=\left[\widetilde{V}^{\pi^*_h,\pi_l^*}_0(s_0)+\gamma^k\mathbb{E}_{g\sim\pi^*_h,s\sim \mathbb{P}^{\pi_l^*,g}_k(\cdot|s_0)}V^{\pi^*_h,\pi^*_l}(s_k)\right] \quad \leftarrow \quad \text{\textcolor{myblue}{By Lemma~\ref{Bellman_Backup_HRL}}}\nonumber\\
&\quad\quad-\left[\widetilde{V}^{\pi^*_h,\pi_l^*}_0(s_0)+\gamma^k\mathbb{E}_{g\sim\pi^*_h,s\sim \mathbb{P}^{\pi_l^*,g}_k(\cdot|s_0)}V^{\pi^*_h,\pi_l}(s_k)\right]\nonumber\\
&\quad\quad\quad\quad+\left[\widetilde{V}^{\pi^*_h,\pi_l^*}_0(s_0)+\gamma^k\mathbb{E}_{g\sim\pi^*_h,s\sim \mathbb{P}^{\pi_l^*,g}_k(\cdot|s_0)}V^{\pi^*_h,\pi_l}(s_k)\right]\nonumber\\
&\quad\quad\quad\quad\quad\quad-\left[\widetilde{V}^{\pi^*_h,\pi_l}_0(s_0)+\gamma^k\mathbb{E}_{g\sim\pi^*_h,s\sim \mathbb{P}^{\pi_l,g}_k(\cdot|s_0)}V^{\pi^*_h,\pi_l}(s_k)\right]\nonumber \\
&=\underbrace{\gamma^k\mathbb{E}_{g\sim\pi^*_h,s\sim \mathbb{P}^{\pi_l^*,g}_k(\cdot|s_0)}\left[V^{\pi^*_h,\pi^*_l}(s_k)-V^{\pi^*_h,\pi_l}(s_k)\right]}_{\text{part a}}+\underbrace{\left[\widetilde{V}^{\pi^*_h,\pi_l^*}_0(s_0)-\widetilde{V}^{\pi^*_h,\pi_l}_0(s_0)\right]}_{\text{part b}}\nonumber \\
&\quad+\underbrace{\gamma^k\left[\mathbb{E}_{g\sim\pi^*_h,s\sim \mathbb{P}^{\pi_l^*,g}_k(\cdot|s_0)}V^{\pi^*_h,\pi_l}(s_k)-\mathbb{E}_{g\sim\pi^*_h,s\sim \mathbb{P}^{\pi_l,g}_k(\cdot|s_0)}V^{\pi^*_h,\pi_l}(s_k)\right]}_{\text{part c}}.
\end{align}

Then, we can deal with the three parts one by one to obtain the derivation of $L_1$. Note that, part b represents the performance discrepancy in the first subtask, caused by different low-level policies $\pi^*_l$ and $\pi_l$. Thus, consider the policy shift of the low-level policies, we suppose
\begin{equation}
\epsilon^g_{\pi^*_l,\pi_l}=\max_{s\in\mathcal{S},g\sim\pi_h}{\rm D}_{TV}\left(\pi^*_l(\cdot|s,g)\Vert\pi_l(\cdot|s,g)\right).
\end{equation}
 
Then, recall $r_{max}$ to be the maximum environmental reward, i.e., $r\leq r_{max}$, we have
\begin{align}
\text{part b}&=\widetilde{V}^{\pi^*_h,\pi_l^*}_0(s_0)-\widetilde{V}^{\pi^*_h,\pi_l}_0(s_0)\nonumber\\
&=\sum_{j=0}^{k-1}\mathbb{E}_{g_{k}\sim\pi^*_h,s,a\sim \mathbb{P}^{\pi^*_l,g}_{j}(\cdot,\cdot|s_0)}\left[\gamma^jr(s_j,a_j,\hat{g})\right] - \sum_{j=0}^{k-1}\mathbb{E}_{g_{k}\sim\pi^*_h,s,a\sim \mathbb{P}^{\pi_l,g}_{j}(\cdot,\cdot|s_0)}\left[\gamma^jr(s_j,a_j,\hat{g})\right]\nonumber\\
&\leq\sum_{j=0}^{k-1}\mathbb{E}_{g_{k}\sim\pi^*_h}2\left[\gamma^jr(s_j,a_j,\hat{g})\right]{\rm D}_{TV}\left(\mathbb{P}^{\pi^*_l,g}_{j}(\cdot,\cdot|s_0)\Big\Vert\mathbb{P}^{\pi_l,g}_{j}(\cdot,\cdot|s_0)\right) \nonumber\\
&\leq 2r_{max}\sum_{j=0}^{k-1}\mathbb{E}_{g_{k}\sim\pi^*_h}\gamma^j j \epsilon^g_{\pi^*_l,\pi_l}. \quad \leftarrow \quad \text{\textcolor{myblue}{By Lemma~\ref{TVD_distance}}}
\end{align}

For part c, note that the joint value function can be bounded as $V^{\pi_h,\pi_l}(s_0)\leq r_{max}/(1-\gamma)$. We can apply Lemma~\ref{TVD_distance} to bound the discrepancy of the low-level policies, and have
\begin{align}
\text{part c}&=\gamma^k\left[\mathbb{E}_{g\sim\pi^*_h,s\sim \mathbb{P}^{\pi_l^*,g}_k(\cdot|s_0)}V^{\pi^*_h,\pi_l}(s_k)-\mathbb{E}_{g\sim\pi^*_h,s\sim \mathbb{P}^{\pi_l,g}_k(\cdot|s_0)}V^{\pi^*_h,\pi_l}(s_k)\right]\nonumber\\
&=\gamma^k\int_{g\in\mathcal{G}}\int_{s\in\mathcal{S}}\pi^*_h(g|s_k)\left(\mathbb{P}^{\pi_l^*,g}_k(s|s_0)-\mathbb{P}^{\pi_l,g}_k(s|s_0)\right)V^{\pi^*_h,\pi_l}(s)dsdg\nonumber\\
&\leq \frac{2\gamma^{k}r_{max}}{1-\gamma}\mathbb{E}_{g\sim\pi_h^*}\left[{\rm D}_{TV}\left(\mathbb{P}^{\pi_l^*,g}_k(\cdot|s_0)\Vert \mathbb{P}^{\pi_l,g}_k(\cdot|s_0)\right)\right]\nonumber\\
&\leq\frac{2\gamma^{k}r_{max}}{1-\gamma}\mathbb{E}_{g\sim\pi_h^*}k\epsilon^g_{\pi^*_l,\pi_l},
\end{align}

At last, for part a, we can apply the same recursion every $k$ step,
\begin{align}
\text{part a}&=\gamma^k\mathbb{E}_{g\sim\pi^*_h,s\sim \mathbb{P}^{\pi_l^*,g}_k(\cdot|s_0)}\left[V^{\pi^*_h,\pi^*_l}(s_k)-V^{\pi^*_h,\pi_l}(s_k)\right]\nonumber \\
&\leq \gamma^{2k}\mathbb{E}_{g\sim\pi^*_h,s\sim \mathbb{P}^{\pi_l^*,g}_{2k}(\cdot|s_0)}\left[V^{\pi^*_h,\pi^*_l}(s_{2k})-V^{\pi^*_h,\pi_l}(s_{2k})\right]\nonumber \\
&\quad + 2r_{max}\sum_{j=k}^{2k-1}\mathbb{E}_{g\sim\pi^*_h}\gamma^j j \epsilon^g_{\pi^*_l,\pi_l} + \frac{2\gamma^{2k}r_{max}}{1-\gamma}\mathbb{E}_{g\sim\pi_h^*}2k\epsilon^g_{\pi^*_l,\pi_l}.
\end{align}

Now, with the derivation of part a, part b and part c, we can combine these and repeat the recursion step for infinitely many times
\begin{align}
L_1 &= \text{part a} + \text{part b} + \text{part c}\nonumber \\
&\leq 2r_{max}\sum_{j=0}^{k-1}\mathbb{E}_{g_{k}\sim\pi^*_h}\gamma^j j \epsilon^g_{\pi^*_l,\pi_l} + \frac{2\gamma^{k}r_{max}}{1-\gamma}\mathbb{E}_{g\sim\pi_h^*}k\epsilon^g_{\pi^*_l,\pi_l} \nonumber\\
&\quad + 2r_{max}\sum_{j=k}^{2k-1}\mathbb{E}_{g_{2k}\sim\pi^*_h}\gamma^j j \epsilon^g_{\pi^*_l,\pi_l} + \frac{2\gamma^{2k}r_{max}}{1-\gamma}\mathbb{E}_{g\sim\pi_h^*}2k\epsilon^g_{\pi^*_l,\pi_l} \nonumber \\
&\quad \quad + \gamma^{2k}\mathbb{E}_{g\sim\pi^*_h,s\sim \mathbb{P}^{\pi_l^*,g}_{2k}(\cdot|s_0)}\left[V^{\pi^*_h,\pi^*_l}(s_{2k})-V^{\pi^*_h,\pi_l}(s_{2k})\right]\nonumber\\
&\quad \ \ \vdots \nonumber\\
&\leq 2r_{max}\sum_{i=0}^{\infty}\sum_{j=0}^{k-1}\mathbb{E}_{g\sim\pi^*_h}\gamma^{(ik+j)} (ik+j) \epsilon^g_{\pi^*_l,\pi_l} + \frac{\gamma^{(i+1)k}}{1-\gamma}\mathbb{E}_{g\sim\pi_h^*}(i+1)k\epsilon^g_{\pi^*_l,\pi_l}\nonumber\\
&\leq 2r_{max}\frac{1+\gamma}{(1-\gamma)^2}\mathbb{E}_{g\sim\pi^*_h}\epsilon^g_{\pi^*_l,\pi_l}.
\end{align}
Thus, we complete the derivation of $L_1$.

\begin{itemize}
  \item \textbf{Derivation of $L_3$}
\end{itemize}

Compared with $L_1$, the term $L_3$ replaces the high-level policy from $\pi^*_h$ to $\pi_h$. Thus, we directly can get $L_3$ from the results of $L_1$ as
\begin{align}
L_3 \leq2r_{max}\frac{1+\gamma}{(1-\gamma)^2}\mathbb{E}_{g\sim\pi_h}\epsilon^g_{\pi^*_l,\pi_l}.
\end{align}

\begin{itemize}
  \item \textbf{Derivation of $L_2$}
\end{itemize}

Similar to the derivation of $L_1$, by adding and subtracting the same term in $L_2$, we have
\begin{align}
L_2&=V^{\pi^*_h,\pi_l}(s_0)-\left[\widetilde{V}^{\pi^*_h,\pi_l}_0(s_0)+\gamma^k\mathbb{E}_{g\sim\pi^*_h,s\sim \mathbb{P}^{\pi_l,g}_{k}(\cdot|s_0)}V^{\pi_h,\pi^*_l}(s_k)\right]\nonumber \\
&\quad\quad+\left[\widetilde{V}^{\pi^*_h,\pi_l}_0(s_0)+\gamma^k\mathbb{E}_{g\sim\pi^*_h,s\sim \mathbb{P}^{\pi_l,g}_{k}(\cdot|s_0)}V^{\pi_h,\pi^*_l}(s_k)\right]-V^{\pi_h,\pi^*_l}(s_0)\nonumber\\
&=\left[\widetilde{V}^{\pi^*_h,\pi_l}_0(s_0)+\gamma^k\mathbb{E}_{g\sim\pi^*_h,s\sim \mathbb{P}^{\pi_l,g}_{k}(\cdot|s_0)}V^{\pi^*_h,\pi_l}(s_k)\right]\nonumber\\
&\quad\quad-\left[\widetilde{V}^{\pi^*_h,\pi_l}_0(s_0)+\gamma^k
\mathbb{E}_{g\sim\pi^*_h,s\sim \mathbb{P}^{\pi_l,g}_{k}(\cdot|s_0)}V^{\pi_h,\pi^*_l}(s_k)\right]\nonumber\\
&\quad\quad\quad\quad+\left[\widetilde{V}^{\pi^*_h,\pi_l}_0(s_0)+\gamma^k\mathbb{E}_{g\sim\pi^*_h,s\sim \mathbb{P}^{\pi_l,g}_{k}(\cdot|s_0)}V^{\pi_h,\pi^*_l}(s_k)\right]\nonumber\\
&\quad\quad\quad\quad\quad\quad-\left[\widetilde{V}^{\pi_h,\pi^*_l}_0(s_0)+\gamma^k\mathbb{E}_{g\sim\pi_h,s\sim \mathbb{P}^{\pi^*_l,g}_{k}(\cdot|s_0)}V^{\pi_h,\pi^*_l}(s_k)\right]\nonumber\\
&=\underbrace{\gamma^k\mathbb{E}_{g\sim\pi^*_h,s\sim \mathbb{P}^{\pi_l,g}_{k}(\cdot|s_0)}\left[V^{\pi^*_h,\pi_l}(s_k)-V^{\pi_h,\pi^*_l}(s_k)\right]}_{\text{part d}}+\underbrace{\left[\widetilde{V}^{\pi^*_h,\pi_l}_0(s_0)-\widetilde{V}^{\pi_h,\pi^*_l}_0(s_0)\right]}_{\text{part e}}\nonumber\\
&\quad\quad + \underbrace{\gamma^k\left[\mathbb{E}_{g\sim\pi^*_h,s\sim \mathbb{P}^{\pi_l,g}_{k}(\cdot|s_0)}V^{\pi_h,\pi^*_l}(s_k)-\mathbb{E}_{g\sim\pi_h,s\sim \mathbb{P}^{\pi^*_l,g}_{k}(\cdot|s_0)}V^{\pi_h,\pi^*_l}(s_k)\right]}_{\text{part f}}.
\end{align}

According to Assumption~\ref{reward_assumption}, we suppose $r(s_t,a_t,\hat{g})=\mathbb{E}_{g\sim\pi_h,s,a\sim\mathbb{P}^{\pi_l,g}_t}r_l(s_t,a_t,g)/\mathcal{D}(g,\hat{g})$, thus we summate the $k$-step reward in the first subtask in part e as
\begin{align}
&\quad\quad\sum_{j=0}^{k-1}\mathbb{E}_{g\sim\pi_h,s,a\sim \mathbb{P}^{\pi_l,g}_{j}(\cdot,\cdot|s_0)}\Big[\gamma^jr(s_j,a_j,\hat{g})\Big]\nonumber \\
&=r(s_0,a_0,\hat{g})\sum_{j=0}^{k-1}\mathbb{E}_{g\sim\pi_h,s,a\sim \mathbb{P}^{\pi_l,g}_{j}(\cdot,\cdot|s_0)}\left[\gamma^j\frac{r(s_j,a_j,\hat{g})}{r(s_0,a_0,\hat{g})}\right]\nonumber\\
&=r(s_0,a_0,\hat{g})\sum_{j=0}^{k-1}\mathbb{E}_{g\sim\pi_h,s,a\sim \mathbb{P}^{\pi_l,g}_{j}(\cdot,\cdot|s_0)}\left[\gamma^j\frac{r_l(s_j,a_j,g)}{\mathcal{D}(g,\hat{g})}\frac{\mathcal{D}(g,\hat{g})}{r_l(s_0,a_0,g)}\right]\nonumber\\
&= r(s_0,a_0,\hat{g})\sum_{j=0}^{k-1}\mathbb{E}_{g\sim\pi_h,s,a\sim \mathbb{P}^{\pi_l,g}_{j}(\cdot,\cdot|s_0)}\left[\gamma^j\frac{r_l(s_j,a_j,g)}{r_l(s_0,a_0,g)}\right].
\end{align}

Since the low-level policy is trained as a goal-conditioned policy, we have $r_l(s_j,a_j,g)\leq r_l(s_k,a_k,g)$. And the summation in the first subtask can be
\begin{align}
&\quad\quad\sum_{j=0}^{k-1}\mathbb{E}_{g\sim\pi_h,s,a\sim \mathbb{P}^{\pi_l,g}_{j}(\cdot,\cdot|s_0)}\Big[\gamma^jr(s_j,a_j,\hat{g})\Big]\nonumber\\
&\leq r(s_0,a_0,\hat{g})\sum_{j=0}^{k-1}\mathbb{E}_{g\sim\pi_h,s,a\sim \mathbb{P}^{\pi_l,g}_{j}(\cdot,\cdot|s_0)}\left[\gamma^j\frac{r_l(s_k,a_k,g)}{r_l(s_0,a_0,g)}\right]\nonumber\\
&=r(s_0,a_0,\hat{g})\frac{1-\gamma^k}{1-\gamma}\frac{r_l(s_k,a_k,g)}{r_l(s_0,a_0,g)}.
\end{align}

Thus, we let the fraction $\mathcal{R}^{\pi_h,\pi_l}_i=r_l(s_k,a_k,g)/r_l(s_0,a_0,g)$ be the subgoal reachability definition, and the part e in $L_2$ can be
\begin{align}
\text{part e}&=\widetilde{V}^{\pi^*_h,\pi_l}_0(s_0)-\widetilde{V}^{\pi_h,\pi^*_l}_0(s_0)\nonumber\\
&=\sum_{j=0}^{k-1}\mathbb{E}_{g\sim\pi^*_h,s,a\sim \mathbb{P}^{\pi_l,g}_{j}(\cdot,\cdot|s_0)}\left[\gamma^jr(s_j,a_j,\hat{g})\right] - \sum_{j=0}^{k-1}\mathbb{E}_{g\sim\pi_h,s,a\sim \mathbb{P}^{\pi^*_l,g}_{j}(\cdot,\cdot|s_0)}\left[\gamma^jr(s_j,a_j,\hat{g})\right]\nonumber\\
&\leq r(s_0,a_0,\hat{g})\frac{1-\gamma^k}{1-\gamma}\mathcal{R}^{\pi^*_h,\pi_l}_0 - \sum_{j=0}^{k-1}\mathbb{E}_{g\sim\pi_h,s,a\sim \mathbb{P}^{\pi^*_l,g}_{j}(\cdot,\cdot|s_0)}\left[\gamma^jr(s_j,a_j,\hat{g})\right]\nonumber\\
&\leq r_{max}\frac{1-\gamma^k}{1-\gamma}\left(\mathcal{R}^{\pi_h,\pi_l}_0-\mathcal{R}^{\pi^*_h,\pi^*_l}_0\right) \quad \leftarrow \quad \text{$\Pi^*$ can achieve \textcolor{mydarkblue}{best subgoal reachability}}\nonumber \\
&\leq r_{max}\frac{1-\gamma^k}{1-\gamma}\mathcal{R}^{\pi_h,\pi_l}_0.
\end{align}

The penultimate inequality is based on the property of the induced optimal hierarchical policies. Compared with the learned $\pi_h$, Figure \textcolor{mydarkblue}{3} shows that $\pi^*_h$ can balance the subgoal reachability and the guidance, thus $\mathcal{R}^{\pi_h,\pi_l}_0\geq\mathcal{R}^{\pi^*_h,\pi_l}_0$ (note that the smaller $\mathcal{R}$ implies the better subgoal reachability). And, the optimal policies $\Pi^*$ can achieve the optimal subgoal reachability, i.e. $\mathcal{R}^{\pi^*_h,\pi^*_l}_0\leq\mathcal{R}^{\pi_h,\pi^*_l}_0$. Thus, we have $\left(\mathcal{R}^{\pi^*_h,\pi_l}_0-\mathcal{R}^{\pi_h,\pi^*_l}_0\right)\leq\left(\mathcal{R}^{\pi_h,\pi_l}_0-\mathcal{R}^{\pi^*_h,\pi^*_l}_0\right)$.

Then, we turn to part f in $L_2$. Considering the upper bound of the joint value function, we have
\begin{align}
\text{part f}&=\gamma^k\left[\mathbb{E}_{g\sim\pi^*_h,s\sim \mathbb{P}^{\pi_l,g}_{k}(\cdot|s_0)}V^{\pi_h,\pi^*_l}(s_k)-\mathbb{E}_{g\sim\pi_h,s\sim \mathbb{P}^{\pi^*_l,g}_{k}(\cdot|s_0)}V^{\pi_h,\pi^*_l}(s_k)\right]\nonumber\\
&\leq \gamma^k\int_{g\in\mathcal{G}}\int_{s\in\mathcal{S}}\left[\pi^*_h(g|s)-\pi_h(g|s)\right]\left[\mathbb{P}^{\pi_l,g}_{k}(s|s_0)-\mathbb{P}^{\pi^*_l,g}_{k}(s|s_0)\right]\frac{r_{max}}{1-\gamma}{\rm d}s{\rm d}g\nonumber\\
&\leq 2\gamma^k\int_{g\in\mathcal{G}}\int_{s\in\mathcal{S}}\frac{r_{max}}{1-\gamma}{\rm d}s{\rm d}g\nonumber\\
&=\frac{2\gamma^kr_{max}}{1-\gamma}.
\end{align}

With the derivation of part e and part f, we deal with part d by the recursion each $k$-steps as
\begin{align}
\text{part d}&=\gamma^k\mathbb{E}_{g\sim\pi^*_h,s\sim \mathbb{P}^{\pi_l,g}_{k}(\cdot|s_0)}\left[V^{\pi^*_h,\pi_l}(s_k)-V^{\pi_h,\pi^*_l}(s_k)\right]\nonumber\\
&\leq \gamma^{2k}\mathbb{E}_{g\sim\pi^*_h,s\sim \mathbb{P}^{\pi_l,g}_{2k}(\cdot|s_0)}\left[V^{\pi^*_h,\pi_l}(s_{2k})-V^{\pi_h,\pi^*_l}(s_{2k})\right]\nonumber\\
&\quad\quad + r_{max}\frac{\gamma^k-\gamma^{2k}}{1-\gamma}\mathcal{R}^{\pi_h,\pi_l}_1 + \frac{2\gamma^{2k}r_{max}}{1-\gamma}.
\end{align}

Thus, we combine the result of part d, part e and part f to obtain the results of $L_2$ as
\begin{align}
L_2 &= \text{part d} + \text{part e} + \text{part f}\nonumber\\
&\leq r_{max}\frac{1-\gamma^k}{1-\gamma}\mathcal{R}^{\pi_h,\pi_l}_0+r_{max}\frac{\gamma^k-\gamma^{2k}}{1-\gamma}\mathcal{R}^{\pi_h,\pi_l}_1+\frac{2\gamma^kr_{max}}{1-\gamma}+\frac{2\gamma^{2k}r_{max}}{1-\gamma}\nonumber\\
&\quad\quad +\gamma^{2k}\mathbb{E}_{g\sim\pi^*_h,s\sim \mathbb{P}^{\pi_l,g}_{2k}(\cdot|s_0)}\left[V^{\pi^*_h,\pi_l}(s_{2k})-V^{\pi_h,\pi^*_l}(s_{2k})\right] \nonumber \\
&\quad \ \ \vdots \nonumber\\
&\leq r_{max}\sum_{i=0}^{\infty}\frac{(1-\gamma^k)\gamma^{ik}}{1-\gamma}\mathcal{R}^{\pi_h,\pi_l}_i+\frac{2\gamma^{(i+1)k}}{1-\gamma}\nonumber \\
&\leq \frac{r_{max}}{(1-\gamma)^2}(\mathcal{R}^{\pi_h,\pi_l}_{max}+2\gamma^k).
\end{align}
In the last inequality, we define
\begin{equation}
\mathcal{R}^{\pi_h,\pi_l}_{max}=\max_{i\in\mathbb{N}}\mathcal{R}^{\pi_h,\pi_l}_i.
\end{equation}

Now, we have the results of $L_1$, $L_2$ and $L_3$. The performance difference bound between $\Pi^*$ and $\Pi$ can be obtained as
\begin{align}
V^{\Pi^*}(s_0)-V^{\Pi}(s_0)&=L_1+L_2+L_3\nonumber\\
&\leq 2r_{max}\frac{1+\gamma}{(1-\gamma)^2}\mathbb{E}_{g\sim\pi^*_h}\epsilon^g_{\pi^*_l,\pi_l} + \frac{r_{max}}{(1-\gamma)^2}(\mathcal{R}^{\pi_h,\pi_l}_{max}+2\gamma^k) \nonumber \\
&\quad\quad+ 2r_{max}\frac{1+\gamma}{(1-\gamma)^2}\mathbb{E}_{g\sim\pi_h}\epsilon^g_{\pi^*_l,\pi_l}\nonumber\\
&=\frac{2r_{max}}{(1-\gamma)^2}\left[(1+\gamma)\mathbb{E}_{g\sim\pi_h}\left(1+\frac{\pi^*_h}{\pi_h}\right)\epsilon^g_{\pi^*_l,\pi_l}+2\left(\mathcal{R}^{\pi_h,\pi_l}_{max} + 2\gamma^k\right)\right].
\end{align}
And the proof is complete.
\end{proof}

\begin{proposition}[Equivalence between $\pi^*$ and $\Pi^*$] 
With the $k$-step trajectory slicing and the alignment method, the performance of $\Pi^*$ and $\pi^*$ is equivalent, i.e., $V^{\pi^*}(s)=V^{\Pi^*}(s)$.
\end{proposition}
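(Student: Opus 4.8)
The plan is to prove the two inequalities $V^{\Pi^*}(s)\le V^{\pi^*}(s)$ and $V^{\Pi^*}(s)\ge V^{\pi^*}(s)$ separately, since together they yield the claimed equality. Throughout I would exploit that the joint value function in Equation~(\ref{Joint_Value_Function}) accumulates exactly the environment reward $r(s_t,a_t,\hat{g})$ at the primitive timescale, so that every hierarchical policy and the flat policy $\pi^*$ are compared on one common objective; this ties the ``performance'' of $\Pi^*$ to the very quantity that $\pi^*$ maximizes.

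For the upper bound, I would observe that a hierarchical policy $\{\pi_h,\pi_l\}$, once unrolled at the level of primitive actions, selects $a_{ik+j}\sim\pi_l(\cdot\mid s_{ik+j},g_{(i+1)k})$, where the active subgoal $g_{(i+1)k}$ is itself a measurable function of the subtask-initial state $s_{ik}$ and the task goal $\hat{g}$. Hence the induced action process is a subgoal-augmented, finite-memory policy on the original MDP. Since in a discounted MDP the optimal stationary Markov policy $\pi^*$ attains the supremum of the discounted return over \emph{all} policies, including history- and memory-dependent ones, every hierarchical policy satisfies $V^{\pi_h,\pi_l}(s)\le V^{\pi^*}(s)$, and in particular $V^{\Pi^*}(s)\le V^{\pi^*}(s)$.

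For the lower bound, I would construct an explicit hierarchical policy $\bar{\Pi}=\{\bar{\pi}_h,\bar{\pi}_l\}$ realizing the $k$-step slicing and the alignment. Slicing the optimal flat trajectory into subtasks of length $k$, I let $\bar{\pi}_h$ emit the aligned subgoal $g_{(i+1)k}=\psi(s_{(i+1)k})$, i.e. the goal-space image of the state that $\pi^*$ reaches $k$ steps after $s_{ik}$, and I set $\bar{\pi}_l(\cdot\mid s,g)=\pi^*(\cdot\mid s)$ on every realized state-subgoal pair. Because the subgoal is pinned to exactly what $\pi^*$ attains, the low-level actions coincide in distribution with those of $\pi^*$, so the primitive-action process induced by $\bar{\Pi}$ has the same state-action occupancy as $\pi^*$; evaluating Equation~(\ref{Joint_Value_Function}) then gives $V^{\bar{\Pi}}(s)=V^{\pi^*}(s)$. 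As $\Pi^*$ is optimal within the hierarchical class, $V^{\Pi^*}(s)\ge V^{\bar{\Pi}}(s)=V^{\pi^*}(s)$, and combining the two bounds closes the argument.

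I expect the main obstacle to lie in the lower-bound construction, namely verifying that the alignment produces a \emph{well-defined} hierarchical policy whose primitive-action law exactly matches $\pi^*$. One must check that $\bar{\pi}_l$ is consistent across subtasks, so that no realized pair $(s,g)$ demands conflicting action distributions; this holds because $\bar{\pi}_h$ fixes $g$ as a function of the subtask-initial state and $\pi^*$ is stationary. One must also check that the subgoal space together with $\psi$ is expressive enough to encode the reached states, so that the alignment is representable. These are mild identifiability conditions on $(\mathcal{G},\psi,\mathcal{D})$; once they hold, the occupancy-matching and hence the value equality follow directly from the definition of the joint value function.
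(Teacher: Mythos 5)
Your proof is correct, and it takes a genuinely broader route than the paper's. The paper defines $\Pi^*$ \emph{as} the induced policy obtained from $k$-step slicing and alignment --- it sets $\pi^*_h(\cdot|s_{ik})=\mathbb{P}^{\pi^*}_k(\cdot|s_{ik})$ and $\pi^*_l(\cdot|s,g)=\pi^*(\cdot|s)$ --- and then proves the proposition by a single direct computation: it unrolls $V^{\pi^*}(s_0)$, regroups the sum $\sum_t$ into blocks $\sum_i\sum_{j=0}^{k-1}$, inserts the (vacuous, since $\pi^*_l$ ignores $g$) expectation over $g\sim\pi^*_h$, and reads off the definition of the joint value function. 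That is exactly your lower-bound construction $\bar\Pi$, promoted to the whole proof because $\Pi^*=\bar\Pi$ by definition. Your additional upper bound $V^{\Pi^*}\le V^{\pi^*}$, via the fact that any hierarchical policy is a subgoal-augmented (hence history-dependent) policy on the base MDP and that a stationary Markov $\pi^*$ attains the supremum over that class, is not needed under the paper's reading but makes your argument robust to the more natural reading of $\Pi^*$ as ``optimal within the hierarchical class''; it also makes explicit the one-sided fact that hierarchization can never exceed flat optimality, which the paper leaves implicit. Your closing caveat is well placed: the occupancy-matching step requires that $(\mathcal{G},\psi)$ can represent the reached states, and the paper silently assumes this by writing the subgoal as the future state itself (effectively taking $\psi$ to be the identity), so your version is the more careful one. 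The only cosmetic gap is that your $\bar\pi_h$ must be stated as sampling $g$ from the pushforward of $\mathbb{P}^{\pi^*}_k(\cdot|s_{ik})$ through $\psi$ rather than as a deterministic function of the realized future state, since the high level acts before the subtask unfolds; you in effect do this, and it matches the paper's construction.
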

\begin{proof}
According to the $k$-step trajectory slicing and the alignment method, the induced optimal hierarchical policies $\Pi^*$ can be generated by aligning with the $k$-step trajectory slice derived by $\pi^*$, thus we have
\begin{align}
g_{(i+1)k}\sim\pi^*_h(\cdot|s_{ik})&=\mathbb{P}^{\pi^*}_k(s_{(i+1)k}|s_{ik})\nonumber \\
&=p(s_{ik})\prod_{j=0}^{k-1}P(s_{ik+j+1}|s_{ik+j},a_{ik+j})\pi^*(a_{ik+j}|s_{ik+j}),
\end{align}
\begin{align}
a_{ik+j}\sim\pi^*_l(\cdot|s_{ik+j},g_{(i+1)k})=\pi^*(a_{ik+j}|s_{ik+j}).
\end{align}
Thus, the value function for $\pi^*$ and the joint value function for $\Pi^*$ can be
\begin{align}
V^{\pi^*}(s_0)&=\sum_t^\infty\gamma^t\mathbb{E}_{s\sim p(s'|s,a),a\sim\pi^*}\left[r(s_t,a_t,\hat{g})\right]\nonumber\\
&=\sum_{i=0}^{\infty}\sum_{j=0}^{k-1}\mathbb{E}_{s\sim p(s'|s,a),a\sim\pi^*}\gamma^{ik+j}\left[r(s_{ik+j},a_{ik+j},\hat{g})\right]\nonumber\\
&=\sum_{i=0}^{\infty}\mathbb{E}_{g\sim\pi_h^*}\left\{\gamma^{ik}\sum_{j=0}^{k-1}\mathbb{E}_{s\sim p(s'|s,a),a\sim\pi_l^*}\gamma^j\left[r(s_{ik+j},a_{ik+j},\hat{g})\right]\right\}\nonumber\\
&=\sum_{i=0}^{\infty}\mathbb{E}_{g\sim \pi^*_h(\cdot|s)}\left[\gamma^{ik}\left(\sum_{j=0}^{k-1}\gamma^j\mathbb{E}_{s,a\sim \mathbb{P}^{\pi^*_l,g}_{ik+j}(\cdot,\cdot|s_0)}r(s_{ik+j},a_{ik+j},\hat{g})\right)\right]\nonumber\\
&=V^{\Pi^*}(s_0)
\end{align}

Thus, through the $k$-step trajectory slicing and the alignment method, the performance of $\Pi^*$ and $\pi^*$ is equivalent. And the proof is complete.
\end{proof}

\subsection{Useful Lemma and Assumption}
\begin{lemma}[Bellman Backup in HRL]\label{Bellman_Backup_HRL}
Consider that the joint value function can be decomposed by the summation of subtasks. Given the initial state $s_{ik}$ at the $i$-th subtask, the Bellman Backup of HRL defined in each subtask can be
\begin{align}
V^{\pi_h,\pi_l}(s_{ik})=\widetilde{V}_i^{\pi_h,\pi_l}(s_{ik})+\gamma^k\mathbb{E}_{g\sim\pi_h,s\sim \mathbb{P}^{\pi_l,g}_{(i+1)k}(\cdot|s_{ik})}\left[V^{\pi_h,\pi_l}(s_{(i+1)k})\right],
\end{align}
where $\widetilde{V}_i^{\pi_h,\pi_l}(s_{ik})$ is the the environment return of $\Pi$ with the $i$-th subtask, formulated as
\begin{equation}
\widetilde{V}_i^{\pi_h,\pi_l}(s_{ik})=\sum_{j=0}^{k-1}\mathbb{E}_{g\sim\pi_h,s,a\sim \mathbb{P}^{\pi_l,g}_{ik+j}(\cdot,\cdot|s_{ik})}\left[\gamma^jr(s_{ik+j},a_{ik+j},\hat{g})\right].
\end{equation}
\end{lemma}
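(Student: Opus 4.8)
The plan is to derive this $k$-step Bellman backup directly from the subtask-grouped form of the joint value function established in the Definition of $V^{\pi_h,\pi_l}$, by peeling off the first subtask. Since the MDP is time-homogeneous and both $\pi_h$ and $\pi_l$ are stationary, I would first treat $s_{ik}$ as a fresh initial state and write $V^{\pi_h,\pi_l}(s_{ik})$ in grouped form as $\sum_{i'=0}^{\infty}\mathbb{E}_{g\sim\pi_h}\left[\gamma^{i'k}\sum_{j=0}^{k-1}\gamma^{j}\mathbb{E}_{s,a\sim\mathbb{P}^{\pi_l,g}_{i'k+j}(\cdot,\cdot|s_{ik})}r(s_{ik+i'k+j},a_{ik+i'k+j},\hat{g})\right]$, where the outer index $i'$ now counts subtasks relative to the origin $s_{ik}$.

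Next I would split the outer summation into the $i'=0$ term and the tail $i'\geq 1$. The $i'=0$ term is, by definition, exactly the within-subtask environment return $\widetilde{V}_i^{\pi_h,\pi_l}(s_{ik})$, which supplies the first summand of the claim. For the tail I would re-index by $i''=i'-1$ and factor out the common $\gamma^{k}$, converting $\sum_{i'\geq 1}\gamma^{i'k}(\cdots)$ into $\gamma^{k}\sum_{i''\geq 0}\gamma^{i''k}(\cdots)$.

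The crucial step is then to recognize the re-indexed tail as a fresh copy of $V^{\pi_h,\pi_l}$ evaluated at $s_{(i+1)k}$. Here I would invoke the Markov property together with the fact that the high-level policy resamples a subgoal at the start of every subtask, i.e. precisely at $s_{(i+1)k}$. Because the grouping period coincides with the subgoal horizon $k$, the subgoal $g$ drawn at $s_{ik}$ governs only the first subtask and does not leak into the tail; conditioning on the state $s_{(i+1)k}$ reached after $k$ steps, whose law is $\mathbb{P}^{\pi_l,g}_{(i+1)k}(\cdot|s_{ik})$ with $g\sim\pi_h$, and using time-homogeneity, the inner re-indexed sum is identical to the grouped form of $V^{\pi_h,\pi_l}(s_{(i+1)k})$. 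Taking the expectation over $s_{(i+1)k}$ and $g$ then yields the second summand $\gamma^{k}\mathbb{E}_{g\sim\pi_h,s\sim\mathbb{P}^{\pi_l,g}_{(i+1)k}(\cdot|s_{ik})}\left[V^{\pi_h,\pi_l}(s_{(i+1)k})\right]$, completing the identity.

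The main obstacle I expect is the bookkeeping at the subtask boundary: one must apply the tower property so that the expectation over the first subtask's subgoal $g\sim\pi_h(\cdot|s_{ik})$ and its $k$-step rollout is cleanly separated from the fresh subgoal draw at $s_{(i+1)k}$ that initiates the tail. Arguing that the tail is a genuine instance of $V^{\pi_h,\pi_l}$, rather than a quantity still entangled with the first subtask's subgoal, is the real crux, and it rests entirely on the alignment between the grouping period and the subgoal horizon $k$.
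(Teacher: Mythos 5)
Your proposal is correct and follows essentially the same route as the paper's own proof: expand the joint value function in its subtask-grouped form, peel off the $i'=0$ term as $\widetilde{V}_i^{\pi_h,\pi_l}(s_{ik})$, re-index the tail, factor out $\gamma^k$, and identify the remainder as $\mathbb{E}_{g\sim\pi_h,\,s\sim\mathbb{P}^{\pi_l,g}_{(i+1)k}(\cdot|s_{ik})}\bigl[V^{\pi_h,\pi_l}(s_{(i+1)k})\bigr]$ via the Markov property and stationarity. Your explicit attention to the tower property at the subtask boundary and to the alignment between the grouping period and the subgoal horizon is a point the paper's proof passes over silently, but it is the same argument.
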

\begin{proof}
According to the decomposition of the joint value function $V^{\pi_h,\pi_l}(s)$, we have
\begin{align}
V^{\pi_h,\pi_l}(s_0)&=\sum_{i=0}^{\infty}\mathbb{E}_{g\sim \pi_h}\left[\gamma^{ik}\left(\sum_{j=0}^{k-1}\gamma^j\mathbb{E}_{s,a\sim \mathbb{P}^{\pi_l,g}_{ik+j}(\cdot,\cdot|s_0)}r(s_{ik+j},a_{ik+j},\hat{g})\right)\right]\nonumber\\
&=\sum_{j=0}^{k-1}\mathbb{E}_{g\sim\pi_h,s,a\sim \mathbb{P}^{\pi_l,g}_{j}(\cdot,\cdot|s_0)}\left[\gamma^jr(s_{j},a_{j},\hat{g})\right]\nonumber\\
&\quad+\sum_{i=1}^{\infty}\mathbb{E}_{g\sim \pi_h}\left[\gamma^{ik}\left(\sum_{j=0}^{k-1}\gamma^j\mathbb{E}_{s,a\sim \mathbb{P}^{\pi_l,g}_{ik+j}(\cdot,\cdot|s_0)}r(s_{ik+j},a_{ik+j},\hat{g})\right)\right]\nonumber\\
&=\widetilde{V}_0^{\pi_h,\pi_l}(s_0)+\gamma^k\mathbb{E}_{g\sim\pi_h,s\sim \mathbb{P}^{\pi_l,g}_k(\cdot|s_k)}\left[V^{\pi_h,\pi_l}(s_k)\right].
\end{align}
Thus, we can conclude that
\begin{align}
V^{\pi_h,\pi_l}(s_{ik})=\widetilde{V}_i^{\pi_h,\pi_l}(s_{ik})+\gamma^k\mathbb{E}_{g\sim\pi_h,s\sim \mathbb{P}^{\pi_l,g}_{(i+1)k}(\cdot|s_{ik})}\left[V^{\pi_h,\pi_l}(s_{(i+1)})\right].
\end{align}
And the proof is complete.  
\end{proof}

\begin{lemma}[Markov chain TVD bound, time-varying]\label{TVD_distance}
Suppose the expected KL-divergence between two policy distributions is bounded as $\epsilon^g_{\pi^*_l,\pi_l}=\max_{s\in\mathcal{S},g\sim\pi_h}{\rm D}_{TV}\left(\pi^*_l(\cdot|s,g)\Vert\pi_l(\cdot|s,g)\right)$, and the initial state distributions are the same. Then, the distance in the state-action marginal is bounded as,
\begin{equation}
{\rm D}_{TV}\left(\mathbb{P}^{\pi^*_l,g}_t(\cdot,\cdot|s_0)\Big\Vert\mathbb{P}^{\pi_l,g}_t(\cdot,\cdot|s_0)\right)\leq t\epsilon^g_{\pi^*_l,\pi_l}
\end{equation}
\end{lemma}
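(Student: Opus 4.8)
The plan is to prove the bound by induction on $t$, exploiting the fact that $\pi^*_l$ and $\pi_l$ act in the same MDP, so that the shared transition kernel $P$ never amplifies the discrepancy and only the per-step policy gap $\epsilon^g_{\pi^*_l,\pi_l}$ is injected at each timestep. I would track two coupled quantities at a fixed goal $g$: the state-action marginal discrepancy $a_t = {\rm D}_{TV}(\mathbb{P}^{\pi^*_l,g}_t(\cdot,\cdot|s_0)\Vert\mathbb{P}^{\pi_l,g}_t(\cdot,\cdot|s_0))$ and the state marginal discrepancy $b_t$ obtained by marginalizing out the action. The base case is immediate: since both chains start from the same $s_0$ (identical initial distribution), $b_0 = 0$.

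The core of the argument is a single-step recursion. For the state-action marginal I would factor $\mathbb{P}^{\pi_l,g}_t(s,a|s_0) = \pi_l(a|s,g)\,d^{\pi_l,g}_t(s|s_0)$, where $d^{\pi_l,g}_t$ is the state marginal, and then add and subtract the cross term $\pi^*_l(a|s,g)\,d^{\pi_l,g}_t(s|s_0)$ inside the $\ell_1$ distance. The triangle inequality splits the discrepancy into two pieces: one in which the action factor is the common $\pi^*_l$ and only the state marginals differ, which collapses to $b_t$ after summing over $a$; and one weighted by $d^{\pi_l,g}_t$ in which only the policies differ, which is at most $\max_{s} {\rm D}_{TV}(\pi^*_l(\cdot|s,g)\Vert\pi_l(\cdot|s,g)) = \epsilon^g_{\pi^*_l,\pi_l}$. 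This yields $a_t \leq b_t + \epsilon^g_{\pi^*_l,\pi_l}$.

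Next I would propagate the state marginal through one transition: since $d^{\pi_l,g}_{t+1}(s'|s_0) = \sum_{s,a} P(s'|s,a)\,\mathbb{P}^{\pi_l,g}_t(s,a|s_0)$ and $\sum_{s'} P(s'|s,a) = 1$, the kernel is a stochastic matrix and contracts the $\ell_1$ distance, giving $b_{t+1} \leq a_t$. Chaining the two recursions produces $a_{t+1} \leq a_t + \epsilon^g_{\pi^*_l,\pi_l}$ together with $b_t \leq a_{t-1}$, so that the state marginal accumulates exactly one gap per elapsed transition and the claimed bound $b_t \leq t\,\epsilon^g_{\pi^*_l,\pi_l}$ follows.

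I expect the main subtlety to be bookkeeping the two marginals correctly and being careful about the base case and off-by-one: the state marginal satisfies the clean $t\,\epsilon^g_{\pi^*_l,\pi_l}$ bound, whereas the state-action marginal carries one extra $\epsilon^g_{\pi^*_l,\pi_l}$ from the first step. In the downstream uses (part b and part c of $L_1$) this is harmless because the offending $t=0$ contribution is multiplied by $t$ and vanishes, but the decomposition must be stated precisely so that the stochastic-kernel contraction step and the policy-injection step are not conflated.
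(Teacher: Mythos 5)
Your proposal is correct and takes essentially the same route as the paper: the paper bounds the per-step transition discrepancy by $\epsilon^g_{\pi^*_l,\pi_l}$ and then delegates the linear accumulation to Lemma B.2 of \citet{janner2019trust}, which is precisely the induction (stochastic-kernel contraction plus one injected policy gap per step) that you carry out explicitly. Your off-by-one remark is a genuine catch that the paper glosses over: for the state-action marginal the honest bound is $(t+1)\,\epsilon^g_{\pi^*_l,\pi_l}$, since at $t=0$ the two state-action marginals already differ by up to $\epsilon^g_{\pi^*_l,\pi_l}$ even though the state distributions coincide, so the clean $t\,\epsilon^g_{\pi^*_l,\pi_l}$ statement holds only for the state marginal.
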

\begin{proof}
Let $p(s'|s)$ as the adjacent state transition probability, which can be defined as
\begin{equation}
p(s'|s)=p(s)P(s'|s,a)\pi(a|s).
\end{equation}

Replacing the policy as the low-level policy $\pi_l$, we can derive the Markov chain TVD bound caused by the different low-level policy,
\begin{align}
&\quad\max_t\mathbb{E}_{s\sim p^t_1(s)}{\rm D}_{KL}(p_1(s'|s)\Vert p_2(s'|s))\nonumber\\
&=\max_t\mathbb{E}_{s\sim p^t_1(s)}p(s)P^a_{s,s'}(s'|s,a){\rm D}_{KL}(\pi^*_l(a|s,g)||\pi_l(a|s,g))\nonumber\\
&\leq\max_t\mathbb{E}_{s\sim p^t_1(s)}{\rm D}_{KL}(\pi^*_l(a|s,g)||\pi_l(a|s,g))\nonumber\\
&\leq\max_{s\in\mathcal{S},g\sim\pi_h}{\rm D}_{TV}\left(\pi^*_l(\cdot|s,g)\Vert\pi_l(\cdot|s,g)\right)\nonumber\\
&=\epsilon^g_{\pi^*_l,\pi_l}
\end{align}

Thus, follow the Lemma \textcolor{mydarkblue}{B.2} in~\cite{janner2019trust}, the distance in the state-action marginal is bounded as,
\begin{equation}
{\rm D}_{TV}\left(\mathbb{P}^{\pi^*_l,g}_t(\cdot,\cdot|s_0)\Big\Vert\mathbb{P}^{\pi_l,g}_t(\cdot,\cdot|s_0)\right)\leq t\epsilon^g_{\pi^*_l,\pi_l}.
\end{equation}
And the proof is complete.
\end{proof}

\begin{assumption}[Refer to Assumption~\textcolor{mydarkblue}{1} in \cite{zhang2022adjacency}]
For all $s\in\mathcal{S}$ and $g\in\mathcal{G}$, the environmental reward can be written as
\begin{equation}
r(s,a,\hat{g})=\sum_{s'}P^a_{s,s'}(s'|s,a)\pi_l(a|s,g)\widetilde{r}(s,s')=\mathbb{E}_{g\sim\pi_h,s,a\sim\mathbb{P}^{\pi_l,g}_t}r_l(s_t,a_t,g)/\mathcal{D}(g,\hat{g}).
\end{equation}
where $\widetilde{r}: \mathcal{S}\times\mathcal{G}\rightarrow[0,r_{max}]$ is a state-reachability reward function. 
\label{reward_assumption}
\end{assumption}
In this assumption, the subgoal $g$ generated by the high-level policy represents the desired state to be reached, while the intermediate low-level state and action details are controlled by the low-level policy. Therefore, considering that the subgoals are generated towards the environmental goal $\hat{g}$, when given a low-level optimal/learned policy, it is natural to assume that the $k$-step stage reward only depends on the state where the agent starts and the state where the agent arrives.

\clearpage
\section{Experimental Details}
\subsection{Implementation Details}
Our method BrHPO and all baselines are implemented based on PyTorch.

\paragraph{BrHPO.} We employ the soft actor-critic (SAC) algorithm~\cite{haarnoja2018soft} as the backbone framework for both high- and low-level policies. For the high-level policy, considering that the subtask trajectory $\tau^{\pi_h,\pi_l}_i$ in each subtask would be abstracted as one transition in high level, we convert the trajectory $(s_{ik:(i+1)k-1}, a_{ik:(i+1)k-1}, g_{(i+1)k}, r_{h, ik}, s_{(i+1)k})$ into a high-level transition tuple $(s_{ik}, g_{(i+1)k}, r_{h, ik}, s_{(i+1)k})$. Then, when a subtask ends, we compute the subgoal reachability by 
\begin{equation*}
\mathcal{R}^{\pi_h,\pi_l}_i=\mathbb{E}_{r_l\sim\tau^{\pi_h,\pi_l}_i}\frac{r_{l,(i+1)k}}{r_{l,ik}}.
\end{equation*}
Then, we can optimize the high-level policy by
\begin{equation*}
Q^{\pi_h}(s,g)=\arg\min_Q\frac{1}{2}\mathbb{E}_{s,g\sim D_h}\left[r_h(s,g)+\gamma\mathbb{E}_{s'\sim D_h,g'\sim\pi_h}Q^{\pi_h}(s',g')-Q^{\pi_h}(s,g)\right]^2,
\end{equation*}
\begin{equation*}
\pi_h=\arg\min_{\pi_h}\mathbb{E}_{s\sim D_h}\left[{\rm D}_{KL}(\pi_h(\cdot|s)\Vert \exp(Q^{\pi_h}(s,g)-V^{\pi_h}(s)))+\lambda_1\mathcal{R}^{\pi_h,\pi_l}_i\right].
\end{equation*}

For the low-level policy which can be trained as a goal-conditioned one, we design the reachability-aware low-level policy as 
\begin{equation*}
\hat{r}_l(s_{ik+j},a_{ik+j},g_{(i+1)k})=r_l(s_{ik+j},a_{ik+j},g_{(i+1)k})-\lambda_2\mathcal{R}^{\pi_h,\pi_l}_i.
\end{equation*}
The training tuples for the low-level policy are formed as the per-step state-action transitions $(s_{ik+j},g_{(i+1)k},a_{ik+j},r_{l,ik+j},s_{ik+j+1},g_{(i+1)k})$\footnote[1]{We use the absolute subgoal in this paper, that is, $g_{(i+1)k}=s_{ik}+\pi_h(\cdot|s_{ik})$.}, which then are stored in the low-level buffer $D_l$. Thus, with the training tuples, we can optimize the low-level policy as,
\begin{equation*}
Q^{\pi_l}(s,a)=\arg\min_Q\frac{1}{2}\mathbb{E}_{s,g,a\sim D_l}\left[\hat{r}_l(s,a,g)+\gamma\mathbb{E}_{s',g\sim D_l,a'\sim\pi_l}Q^{\pi_l}(s',a')-Q^{\pi_l}(s,a)\right]^2,
\end{equation*}
\begin{equation*}
\pi_l=\arg\min_{\pi_l}\mathbb{E}_{s,g\sim D_l}\left[{\rm D}_{KL}(\pi_l(\cdot|s,g)\Vert \exp(Q^{\pi_l}(s,a)-V^{\pi_l}(s)))\right].
\end{equation*}

\paragraph{Algorithm framework.}
We briefly give an overview of our proposed BrHPO in algorithm~\ref{algo_BrHPO}. Notably, the mutual response mechanism effectively calculates the subgoal reachability for bilateral information and then incorporates it into hierarchical policy optimization for mutual error correction, promoting performance and reducing computation load. 
\begin{algorithm}[ht]
      \caption{Bidirectional-reachable Hierarchical Policy Optimization (BrHPO)}
      \label{algo_BrHPO}
      \begin{algorithmic}
          \State \textbf{initialize:} policy networks $\pi_h$, $\pi_l$, $Q$-networks $Q^{\pi_h}$, $Q^{\pi_l}$, replay buffers for high-level $D_h$ and low-level $D_l$
          \For{each training episode}
          \While{not \textit{done}}
          \State sample subgoals $g\sim\pi_h(\cdot|s)$
            \For{each step in a subtask}
              \State Sample actions $a\sim\pi_l(\cdot|s,g)$
              \State Store $(s,g,a,r_l,s',g)$ into a temp buffer
              \State Update $\pi_l$ by (\ref{critic_low}) and (\ref{actor_low}) from $D_l$ \Comment{\textcolor{myblue}{low-level policy optimization}}
            \EndFor
            \State Calculate $\mathcal{R}^{\pi_h,\pi_l}_i$ by (\ref{appro_subgoal_reachability}) \Comment{\textcolor{myred}{subgoal reachability computation}}
            \State Compute $\hat{r}_l$ by (\ref{modified_low_reward}) and push the tuples in $D_l$ \Comment{\textcolor{myblue}{reachability-aware low-level reward}}
            \State Store $(s,g,r_h,s',\hat{\mathcal{R}}^{\pi_h,\pi_l}_i)$ into $D_h$
            \State Update $\pi_h$ by (\ref{critic_high}) and (\ref{actor_high}) from $D_l$ \Comment{\textcolor{mygreen}{high-level policy optimization}}
          \EndWhile
        \EndFor
      \end{algorithmic}
\end{algorithm}

\paragraph{HIRO.}  
In this work~\cite{nachum2018data}, to deal with the non-stationarity, where old off-policy experience may exhibit different transitions conditioned on the same goals, they heuristically relabel the subgoal $\tilde{g}$ as
\begin{equation*}
    \log \mu^{lo}(a_{t:t+c+1}|s_{t:t+c+1},\tilde{g}_{t:t+c+1})\propto-\frac{1}{2}\sum_{i=t}^{t+c-1}\|a_i-\mu^{lo}(s_i,\tilde{g}_i)\|^2_2+\text{const}.
\end{equation*}
To solve this problem efficiently, they calculated the quantity on eight candidate goals sampled randomly from a Gaussian centred at $s_{t+c}-s_t$. Then, with the correcting high-level experience, the high-level policy can be optimized by off-policy methods. Compared with our methods, the off-correction can be regarded as a low-level domination method, which requires the high-level experience to be modified by the subgoal reachability demonstrated at a low level.

\paragraph{HIGL.}
In this work~\cite{kim2021landmark}, to restrict the high-level action space from the whole goal space to a $k$-step adjacent region, they introduced the shortest transition distance as a constraint in high-level policy optimization. Besides, they utilized farthest point sampling and priority queue $\mathcal{Q}$ to improve the subgoal coverage and novelty. To enhance the subgoal reachability, they made pseudo landmark be placed between the selected landmark and the current state in the goal space as follows:
\begin{equation*}
g_t^{\text{pseudo}}:=g_t^{\text{cur}}+\delta_{\text{pseudo}}\cdot\frac{g_t^{\text{sel}}-g_t^{\text{cur}}}{\|g_t^{\text{sel}}-g_t^{\text{cur}}\|^2}.
\end{equation*}
To establish the adjacency constraint by the shortest transition distance, they refer to HRAC~\cite{zhang2020generating} and adopt an adjacent matrix to model it. Specifically, we note that the performance of HIGL in the AntMaze task is different from the original report in their paper, mainly due to the different scales. Thus, we set the same scale for all tasks for fairness. To ensure that HIGL performs well in these tasks, we adjusted hyper-parameters such as "landmark coverage" and "n landmark novelty".

\begin{figure}[!h]
\centering
\includegraphics[width=0.6\textwidth]{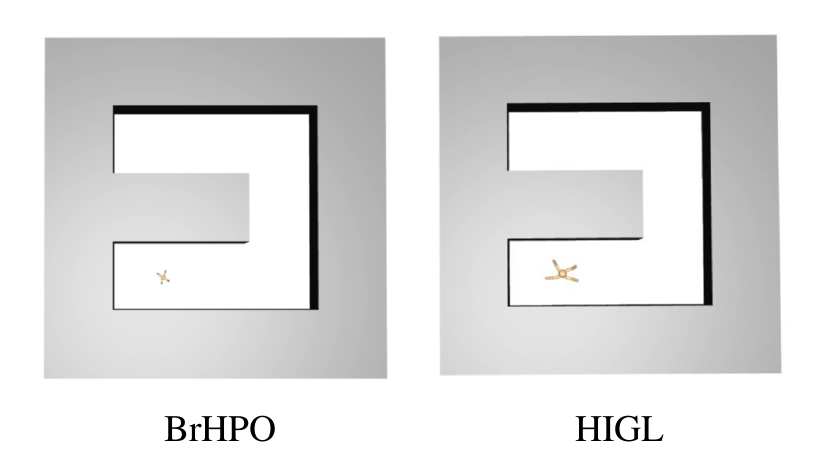} 
\caption{Comparison of the scales in the maze tasks between BrHPO and HIGL.}
\label{HIGH_antmaze}
\end{figure}

\paragraph{CHER.} This work~\cite{kreidieh2019inter} proposed a cooperation framework for HRL. In this work,  the HRL problem can similarly be framed as a constrained optimization problem, 
\begin{equation*}
\max_{\pi_m}\left[J_m+\min_{\lambda\leq0}\left(\lambda\delta-\lambda\min_{\pi_w}J_w\right)\right].
\end{equation*}
To deal with this problem, they update the high- ($\pi_m$) and low-level ($\pi_w$) policies by
\begin{equation*}
\theta_w\leftarrow\theta_w+\alpha\nabla_{\theta_w}J_w, \quad\text{and}, \quad \theta_m\leftarrow\theta_m+\alpha\nabla_{\theta_w}(J_w+\lambda J_w).
\end{equation*}
Compared to CHER, our BrHPO method distinguishes itself in several key aspects. In CHER, hierarchical cooperation is achieved solely through high-level policy optimization, while the low-level policy is trained as a generally goal-conditioned policy without further improvement. Moreover, the high-level optimization in CHER introduces $J_w$ as $(J_w+\lambda J_w)$, necessitating a focus on the step-by-step behaviour of the low-level policy.

In contrast, our BrHPO method incorporates the concept of subgoal reachability, which considers the initial and final states of the subtasks. This design choice empowers the high-level policy to relax the exploration burden on the low-level policy. By leveraging subgoal reachability, our approach enables more efficient exploration of the low-level policy and facilitates effective hierarchical cooperation between the high-level and low-level policies.
 
\paragraph{RIS.}
In this work~\cite{chane2021goal}, based on the hindsight method, they collected feasible state trajectories and predicted an appropriate distribution of imagined subgoals. They first defined subgoals $s_g$ as midpoints on the path from the current state $s$ to the goal $g$, and further minimized the length of the paths from $s$ to $s_g$ and $s_g$ to $g$. Thus, the high-level policy can be updated as
\begin{equation*}
\pi_{k+1}^H=\arg\min_{\pi^H}\mathbb{E}_{(s,g)\sim D, s_g\sim\pi^H(\cdot|s,g)}[C_\pi(s_g|s,g)].
\end{equation*}
Then, with the imagined subgoals, the low-level policy can be trained by
\begin{equation*}
\pi_{\theta_{k+1}}=\arg\max_{\theta}\mathbb{E}_{(s,g)\sim D}\mathbb{E}_{a\sim\pi_\theta(\cdot|s,g)}\left[Q^{\pi}(s,a,g)-\alpha D_{KL}\left(\pi_\theta\Vert\pi^{prior}_k\right)\right].
\end{equation*}

\subsection{Network Architecture}
For the hierarchical policy network, we employ SAC~\cite{haarnoja2018soft} as both the high-level and the low-level policies. Each actor and critic network for
both high level and low level consists of 3 fully connected layers with ReLU nonlinearities. The size of each hidden layer is $(256, 256)$. The output of the high- and low-level actors is activated using the linear function and is scaled to the range of corresponding action space. 

We use Adam optimizer~\cite{kingma2014adam} for all networks in BrHPO.

\subsection{Environmental Setup}
We adopt six challenging long-term tasks to evaluate BrHPO, which can be categorized into the \textit{dense} case and the \textit{sparse} case. For the maze navigation tasks, a simulated ant starts at $(0,0)$ and the environment reward is defined as $r=-\sqrt{(x-g_x)^2+(y-g_y)^2}$ (except for AntFall, $r=-\sqrt{(x-g_x)^2+(y-g_y)^2+(z-g_z)^2}$). While in the robotics manipulation tasks, a manipulator is initialized with a horizontal stretch posture. The environmental reward is defined as a binary one, determined by the distance between the  end-effector (or the object in Pusher) and the target point
\begin{equation}
  r=
  \begin{cases}
    -1, & d > 0.25, \\
    0, & d \leq 0.25.
  \end{cases}
\end{equation}
And, the success indicator is defined as whether the final distance is less than a pre-defined threshold, where the maze navigation tasks require $d<5$ and the robotics manipulation tasks require $d<0.25$.

\paragraph{AntMaze.} 
A simulated eight-DOF ant starts from the left bottom $(0,0)$ and needs to approach the left top corner $(0,16)$. At each training episode, a target position is sampled uniformly at random from $g_x\sim[-4,20],g_y\sim[-4,20]$. At the test episode, the target points are fixed at $(g_x,g_y)=(0,16)$.

\paragraph{AntBigMaze.}
Similar to the AntMaze task, we design a big maze to evaluate the exploration capability of BrHPO. In particular, the target position is chosen randomly from one of $(g_x,g_y)=(32,8)$ and $(g_x,g_y)=(66,0)$, which makes it harder to find a feasible path.

\paragraph{AntPush.}
A movable block at $(0,8)$ is added to this task. The ant needs to move to the left side of the block and push it into the right side of the room, for a chance to reach the target point above, which requires the agent to avoid training a greedy algorithm. At each episode, the target position is fixed to $(g_x,g_y)=(0,19)$.

\paragraph{AntFall.}
In this task, the agent is initialized on a platform of height 4. Like the AntPush environment, the ant has to push a movable block at $(8,8)$ into a chasm to create a feasible road to the target, which is on the opposite side of the chasm, while a greedy policy would cause the ant to walk towards the target and fall into the chasm. At each episode, the target position is fixed to $(g_x,g_y,g_z)=(0,27,4.5)$.

\paragraph{Reacher3D.}
A simulated 7-DOF robot manipulator needs to move its end-effector to a desired position. The initial position of the end-effector is at $(0,0,0)$ while the target is sampled from a Normal distribution with zero mean and 0.1 standard deviation.

\paragraph{Pusher.}
Pusher additionally includes a puck-shaped object based on the Reacher3D task, and the end-effector needs to find the object and push it to a desired position. At the initialization, the object is placed randomly and the target is fixed at $(g_x,g_y,g_z)=(0.45 , -0.05 , -0.323)$.

We summarise these six tasks in Table~\ref{env_para}.
\begin{table}[!h]
  \caption{Overview on Environment settings.}
  \label{environment_setting}
  \begin{center}
    \resizebox{1.0\textwidth}{!}{
      \begin{tabular}{
          >{\centering}m{0.15\textwidth}
          | c
          | c
          | c
          | c
          | c
      }
      \toprule
      Environment & state & action & environment reward & episode step & success indicator \\
      \midrule
      AntMaze & 32 & 8 & negative x-y distance & 500 & $r_{\text{final}}\geq -5$\\
      \midrule
      AntBigMaze & 32 & 8 & negative x-y distance & 1000 & $r_{\text{final}}\geq -5$\\
      \midrule
      AntPush & 32 & 8 & negative x-y distance & 500 & $r_{\text{final}}\geq -5$\\
      \midrule
      AntFall & 33 & 8 & negative x-y-z distance & 500 & $r_{\text{final}}\geq -5$\\
      \midrule
      Reacher3D & 20 & 7 & negative x-y-z distance & 100 & $d_{\text{final}}\leq 0.25$\\
      \midrule
      Pusher & 23 & 7 & negative x-y-z distance & 100 & $d_{\text{final}}\leq 0.25$\\
      \bottomrule
      \end{tabular}
    }
  \end{center}
  \label{env_para}
\end{table}

\clearpage
\subsection{Hyper-parameters}
Table~\ref{hyperparameters} lists the hyper-parameters used in training BrHPO over all tasks.
\begin{table}[!h]
    \caption{The hyper-parameters settings for BrHPO.}
    \label{hyperparameters}
    \begin{center}
      \resizebox{1.0\textwidth}{!}{
        \begin{tabular}{
            >{\centering}m{0.15\textwidth}
            | c
            | c
            | c
            | c
            | c
            | c
        }
            \toprule
            & AntMaze & AntBigMaze & AntPush & AntFall & Reacher3D & Pusher\\
            \midrule
            $Q$-value network (both high and low) & \multicolumn{6}{c}{
              MLP with hidden size 256
            } \\
            \midrule
            policy network (both high and low) & \multicolumn{6}{c}{
              Gaussian MLP with hidden size 256
            } \\
            \midrule
            discounted factor $\gamma$ & \multicolumn{6}{c}{0.99} \\
            \midrule
            soft update factor $\tau$ & \multicolumn{6}{c}{0.005} \\
            \midrule
            $Q$-network learning rate & \multicolumn{6}{c}{0.001} \\
            \midrule
            policy network learning rate & \multicolumn{6}{c}{0.0001} \\
            \midrule
            automatic entropy tuning (high-level) & \multicolumn{2}{c|}{False} & \multicolumn{2}{c|}{True} & \multicolumn{2}{c}{False} \\
            \midrule
            automatic entropy tuning (low-level) & \multicolumn{6}{c}{False} \\
            \midrule
            batch size & \multicolumn{6}{c}{128} \\
            \midrule
            update per step & \multicolumn{6}{c}{1} \\
            \midrule
            target update interval & \multicolumn{6}{c}{2} \\
            \midrule
            high-level replay buffer & \multicolumn{6}{c}{1e5} \\
            \midrule
            low-level replay buffer & \multicolumn{6}{c}{1e6} \\
            \midrule
            start steps & \multicolumn{6}{c}{5e3} \\
            \midrule
            subtask horizon & \multicolumn{4}{c|}{20} & \multicolumn{2}{c}{10} \\
            \midrule
            reward scale & \multicolumn{6}{c}{1} \\
            \midrule
            high-level responsive factor $\lambda_1$ & \multicolumn{2}{c|}{2} & \multicolumn{2}{c|}{0.5} & \multicolumn{2}{c}{2}\\
            \midrule
            low-level responsive factor $\lambda_2$ & \multicolumn{4}{c|}{10} & \multicolumn{2}{c}{5} \\
            \bottomrule
        \end{tabular}
      }
    \end{center}
\end{table}

\section{Additional experiments}\label{sec:additional_results}
\paragraph{Additional Metrics.}
We report additional~(aggregate) performance metrics of BrHPO and other baselines on the six tasks using the \texttt{rliable} toolkit~\cite{agarwal2021deep}. As show in Figure~\ref{fig:rliable}, BrHPO outperforms other baselines in terms of Median, interquantile mean~(IQM), Mean and Optimality Gap results.
\begin{figure}[!h]
  \centering
  \includegraphics[width=\textwidth]{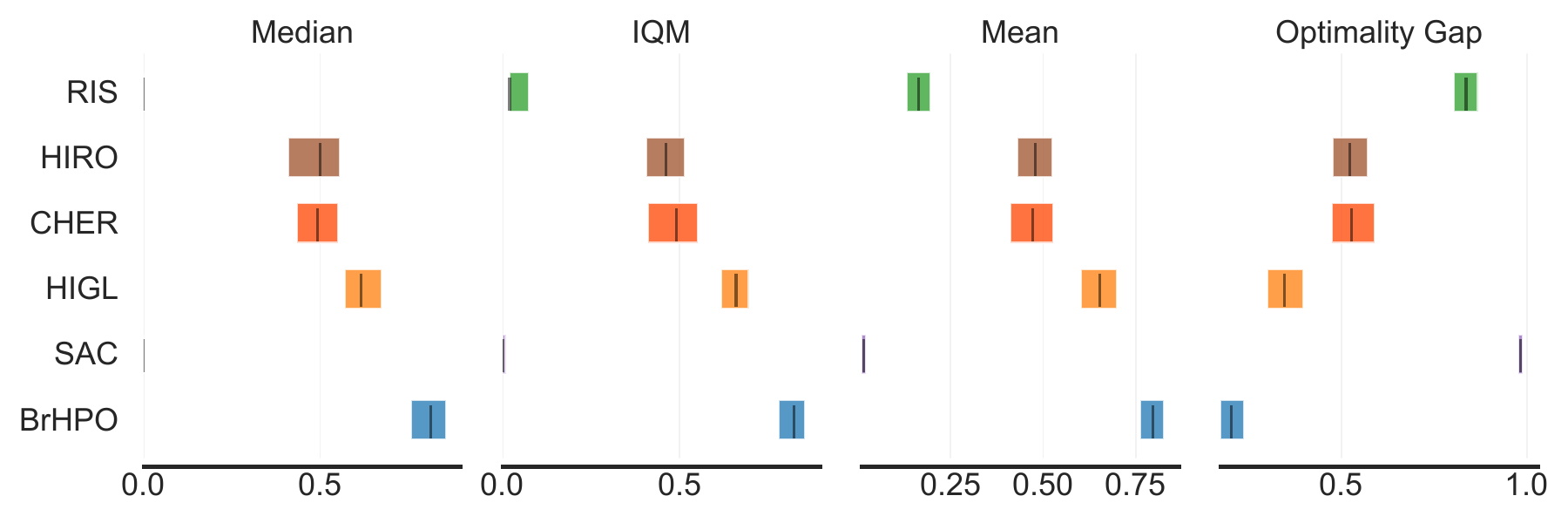}
  \caption{Median, IQM, Mean~(higher values are better) and Optimality Gap~(lower values are better) performance of BrHPO and all baselines on six tasks.}
  \label{fig:rliable}
\end{figure} 

\paragraph{Subgoal reachability report.} 
We report the average subgoal reachability $\mathcal{R}^{\pi_h,\pi_l}_i$ of each environment by Table~\ref{subgoal_reachability_table}. Note that, the value $\mathcal{R}^{\pi_h,\pi_l}_i\rightarrow 0$ means the final distance $\mathcal{D}(\psi(s_{(i+1)k}),g_{(i+1)k})\rightarrow 0$, thus implying the better subgoal reachability. From the results, our implementation is simple yet effective, which can improve subgoal reachability significantly. Besides, the results show that when there are contact dynamics in the environment, such as AntPush, AntFall and Pusher, the subgoal reachability may be decreased, which inspires us to further develop investigation in these cases. 
 
\begin{table}[!h]
  \caption{The average subgoal reachability of BrHPO.}
  \label{subgoal_reachability_table}
  \begin{center}
    \resizebox{1.0\textwidth}{!}{
      \begin{tabular}{
          >{\centering}m{0.15\textwidth}
          | c
          | c
          | c
          | c
          | c
          | c
      }
          \toprule
          Environment & AntMaze & AntBigMaze & AntPush & AntFall & Reacher3D & Pusher \\
          \midrule
          subgoal reachability
          & 0.22
          & 0.29
          & 0.33
          & 0.32
          & 0.13
          & 0.18
          \\
          \bottomrule
      \end{tabular}
    }
  \end{center}
\end{table}

\paragraph{Ablation by the sparse environment.}
Additionally, we provide ablation studies conducted on the Reacher3D task (\textbf{\textit{sparse}}) instead of the AntMaze task (\textbf{\textit{dense}}). We investigate the effectiveness of the mutual response mechanism by 1) the three variants of BrHPO, containing \textbf{\textit{Vanilla}}, \textbf{\textit{NoReg}} and \textbf{\textit{NoBonus}}, and 2) the weighted factors $\lambda_1$ and $\lambda_2$. We show the results in Figure~\ref{Reacher3D_ablation_variants}. Overall, we find that the tendency from the Reacher3D task is similar to the AntMaze task, which verifies the effectiveness of our BrHPO in the \textit{sparse} reward case.

\begin{figure}[!h]
  \centering
  \begin{subfigure}[t]{0.32\textwidth}
    \centering
    \includegraphics[width=\textwidth]{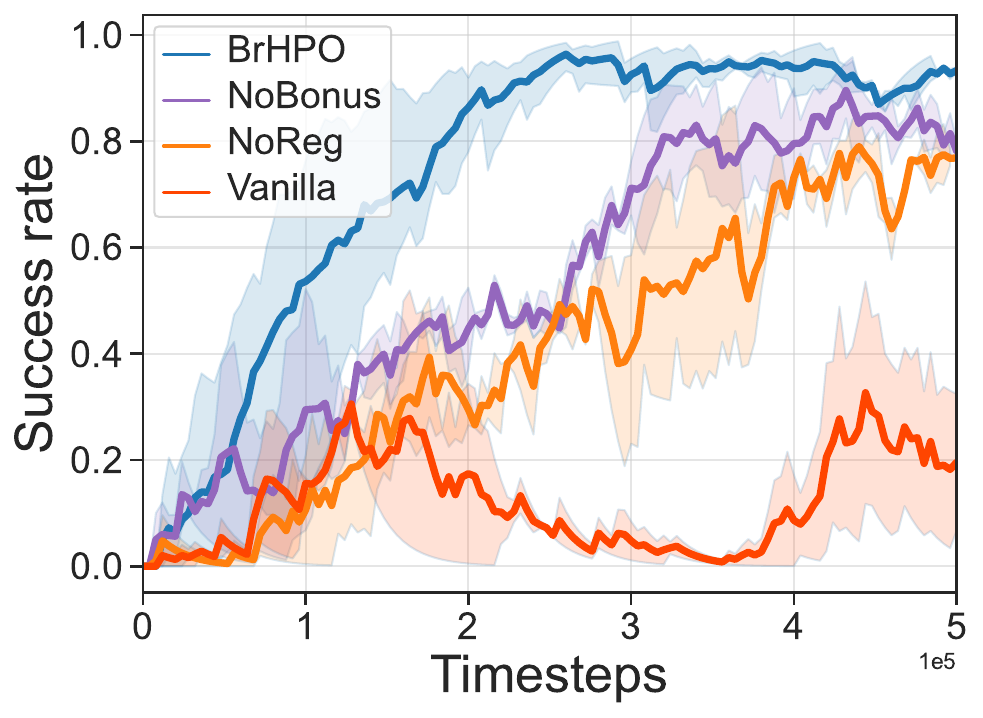}
    \caption{Diff. variants}
  \end{subfigure}
  \begin{subfigure}[t]{0.32\textwidth}
    \centering
    \includegraphics[width=\textwidth]{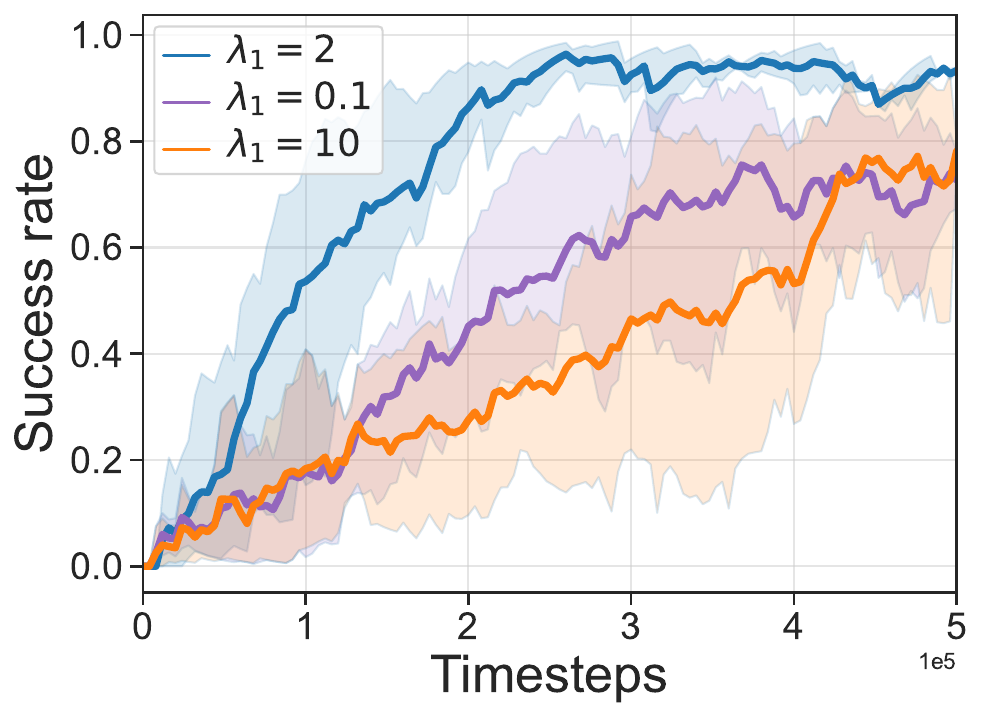}
    \caption{Diff. $\lambda_1$}
  \end{subfigure}
  \hfill
  \begin{subfigure}[t]{0.32\textwidth}
    \centering
    \includegraphics[width=\textwidth]{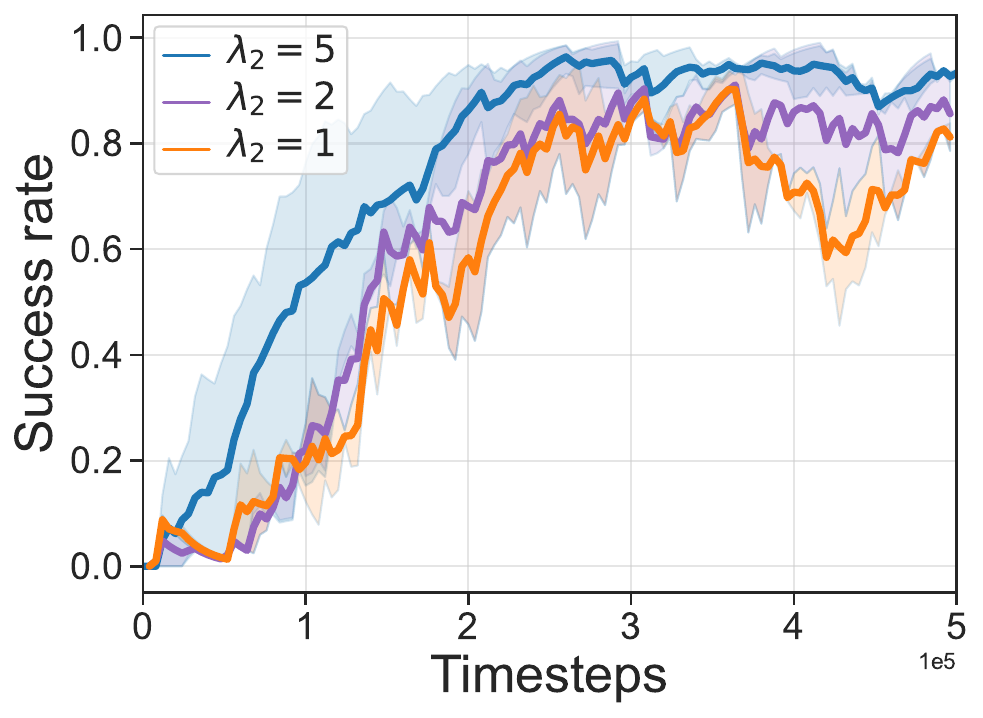}
    \caption{Diff. $\lambda_2$}
  \end{subfigure}
  \caption{The ablation of mutual response mechanism by Reacher3D task. Mean and std by 4 runs.}
  \label{Reacher3D_ablation_variants}
\end{figure}

\newpage
\paragraph{Empirical study in stochastic environments.}
To empirically verify the stochasticity robustness of BrHPO, we utilize it the a set of stochastic tasks, including stochastic AntMaze, AntPush and Reacher3D, which are modified from the original tasks. Referring to HRAC~\cite{zhang2020generating}, we interfere with the position of the ant (x,y) and the position of the end-effector (x,y,z) with Gaussian noise of different standard deviations, including $\sigma=0.01$, $\sigma=0.05$ and $\sigma=0.1$, to verify the robustness against the increasing environmental stochasticity. As shown in Figure~\ref{stochastic_ablation_variants}, BrHPO can achieve similar asymptotic performance with different noise magnitudes in stochastic AntMaze, AntPush and Reacher3D, which shows the robustness to stochastic environments.

\begin{figure}[!h]
  \centering
  \begin{subfigure}[t]{0.32\textwidth}
    \centering
    \includegraphics[width=\textwidth]{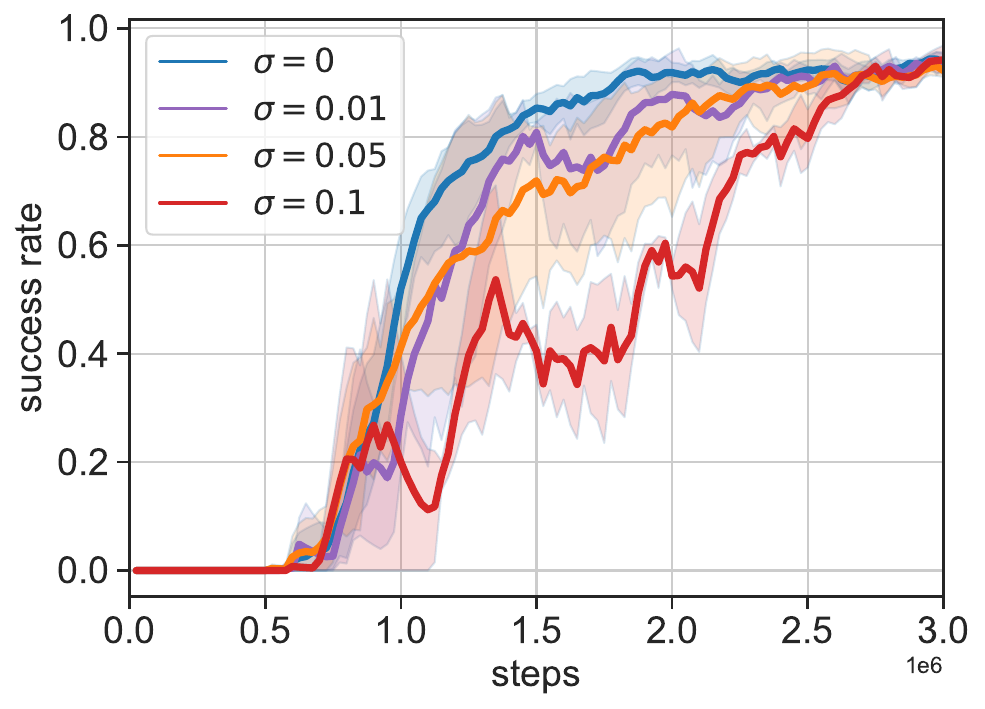}
    \caption{Stochastic AntMaze}
  \end{subfigure}
  \begin{subfigure}[t]{0.32\textwidth}
    \centering
    \includegraphics[width=\textwidth]{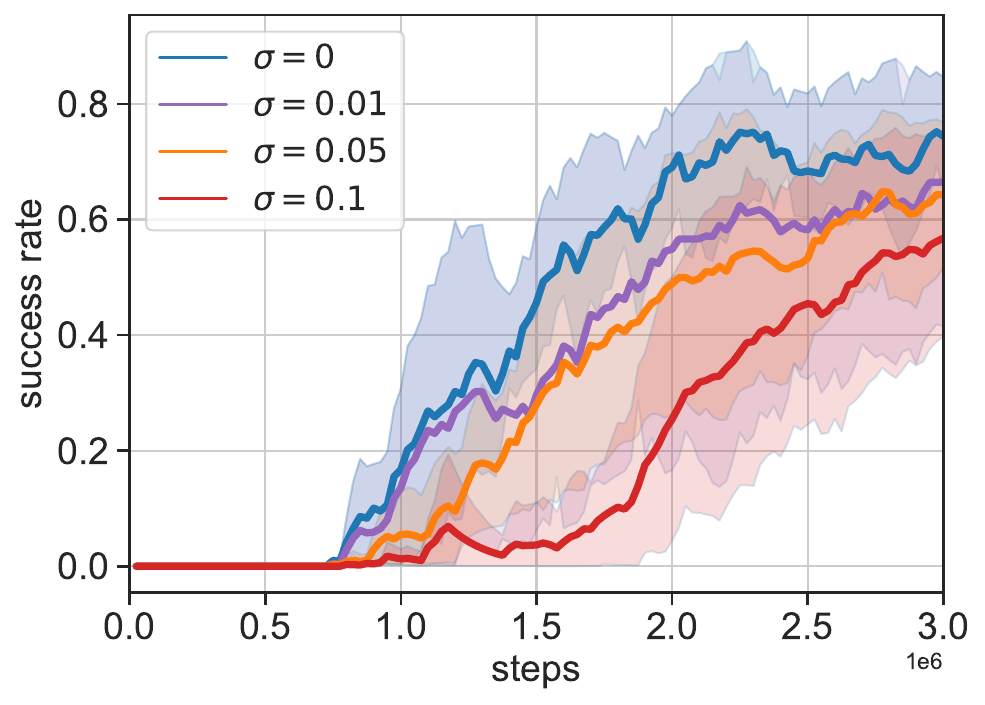}
    \caption{Stochastic AntPush}
  \end{subfigure}
  \hfill
  \begin{subfigure}[t]{0.32\textwidth}
    \centering
    \includegraphics[width=\textwidth]{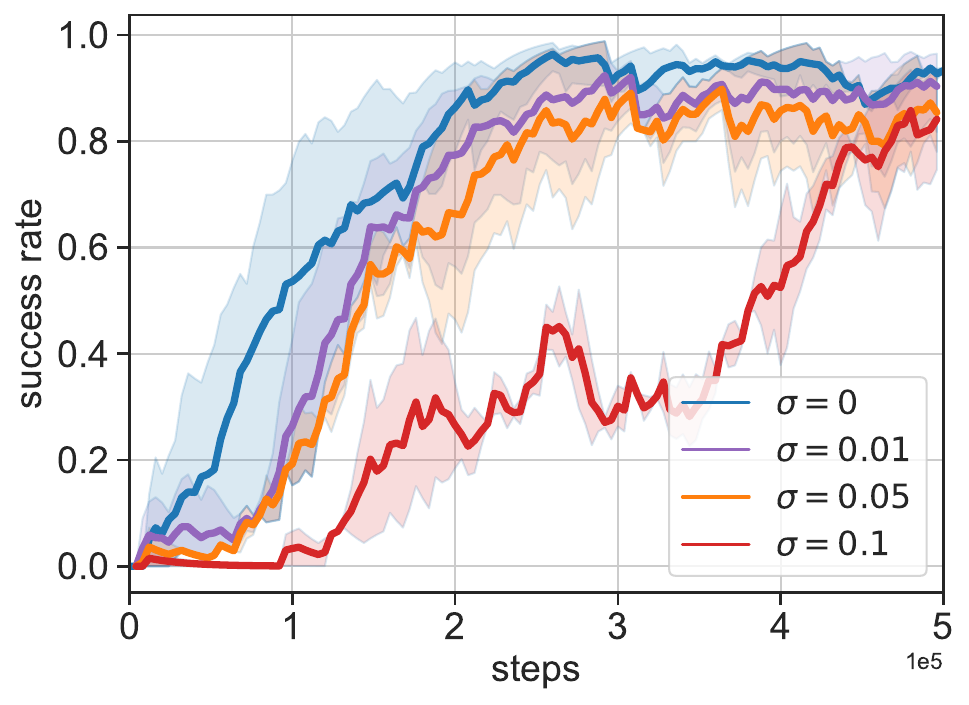}
    \caption{Stochastic Reacher3D}
  \end{subfigure}
  \caption{The empirical evaluation of BrHPO by stochastic environments. Mean and std by 4 runs.}
  \label{stochastic_ablation_variants}
\end{figure}

\subsection{Computing Infrastructure and Training Time}\label{appendix:training_time}
For completeness, we list the computing infrastructure and benchmark training times for BrHPO and all baselines by Table~\ref{Computation_time}. As discussed in section~\ref{Ablation}, the training complexity of BrHPO is much less than other HRL methods, which can be comparable to the flat policy.
\begin{table}[!h]
  \caption{Computing infrastructure and training time on each task (in hours).}
  \label{Computation_time}
  \begin{center}
      \begin{tabular}{
          >{\centering}m{0.15\textwidth}
          | c
          | c
          | c
          | c
          | c
          | c
      }
          \toprule
          & AntMaze & AntBigMaze & AntPush & AntFall & Reacher3D & Pusher \\
          \midrule
          CPU & \multicolumn{6}{c}{
              AMD EPYC™ 7763
          } \\
          \midrule
          GPU & \multicolumn{6}{c}{
              NVIDIA GeForce RTX 3090
          } \\
          
          \midrule
          HIRO
          & 16.66
          & 23.14
          & 18.29
          & 25.43
          & 3.42
          & 4.25
          \\
          \midrule
          HIGL
          & 31.59
          & 48.45
          & 30.95
          & 49.60
          & 5.96
          & 7.05
          \\
          \midrule
          CHER
          & 15.38
          & 20.53
          & 16.71
          & 21.37
          & 2.96
          & 3.16
          \\
          \midrule
          RIS
          & 40.83
          & 53.49
          & 38.46
          & 57.05
          & 8.63
          & 9.88
          \\
          \midrule
          \textcolor{mygreen}{\textbf{SAC}}
          & \textcolor{mygreen}{\textbf{10.57}}
          & \textcolor{mygreen}{\textbf{11.36}}
          & \textcolor{mygreen}{\textbf{11.75}}
          & \textcolor{mygreen}{\textbf{15.64}}
          & \textcolor{mygreen}{\textbf{2.35}}
          & \textcolor{mygreen}{\textbf{2.68}}
          \\
          \midrule
          \textcolor{myblue}{\textbf{BrHPO}}
          & \textcolor{myblue}{\textbf{12.75}}
          & \textcolor{myblue}{\textbf{18.74}}
          & \textcolor{myblue}{\textbf{13.43}}
          & \textcolor{myblue}{\textbf{19.17}}
          & \textcolor{myblue}{\textbf{2.73}}
          & \textcolor{myblue}{\textbf{3.53}}
          \\
          \midrule
          \textcolor{myred}{\textbf{comparison
          (Ours - SAC)}}
          & \textcolor{myred}{\textbf{2.18}}
          & \textcolor{myred}{\textbf{7.38}}
          & \textcolor{myred}{\textbf{1.68}}
          & \textcolor{myred}{\textbf{3.53}}
          & \textcolor{myred}{\textbf{0.38}}
          & \textcolor{myred}{\textbf{0.85}}
          \\
          \bottomrule
      \end{tabular}
  \end{center}
\end{table}

\end{document}